\documentclass[conference]{IEEEtran}
\IEEEoverridecommandlockouts
\usepackage{algorithm}
\usepackage{algpseudocode}
\usepackage{amsmath,amssymb}
\usepackage{mathtools}
\usepackage{cite}
\usepackage{graphicx}
\usepackage{textcomp}
\usepackage{xcolor}
\usepackage{multirow}
\usepackage{array}
\usepackage{booktabs}
\usepackage{amsthm}
\usepackage{subcaption}
\usepackage[hidelinks]{hyperref}
\newtheorem{proposition}{Proposition}[section]

\newtheorem{theorem}[proposition]{Theorem}
\newtheorem{corollary}[proposition]{Corollary}
\newtheorem{assumption}{Assumption}[section]
\theoremstyle{definition}
\newtheorem{remark}{Remark}[section]

\def\BibTeX{{\rm B\kern-.05em{\sc i\kern-.025em b}\kern-.08em
    T\kern-.1667em\lower.7ex\hbox{E}\kern-.125emX}}
\begin{document}

\title{Three-Pronged Spectral Control for Federated Parameter Efficient Fine Tuning}

\author{Shiva Raj Pokhrel, Dipsan Bhattarai and Anwar Walid, \textit{Fellow, IEEE }
 \thanks{S.~R.~Pokhrel and D.~Bhattarai are with the School of IT, Deakin University, Australia. A.~Walid is with Columbia University, USA.  Corresponding author: S.~R.~Pokhrel; emails: \{shiva.pokhrel@deakin.edu.au, s225219805@deakin.edu.au, aie13@columbia.edu\}}}

\maketitle

\begin{abstract}
Federated parameter-efficient fine-tuning (PEFT) enables communication-efficient adaptation of large pretrained models on decentralized edge data, but it remains fragile under non-IID client heterogeneity. In low-rank adaptation (LoRA), different clients may learn locally useful but spectrally misaligned update subspaces, causing high-variance aggregation and poor global transfer. We propose \textit{TRISHUL}, a spectral-control framework for robust federated PEFT. TRISHUL follows the FL no-raw-data-sharing setting but does not itself provide formal privacy guarantees. TRISHUL uses shared frozen multi-head low-rank bases to obtain algebraically exact aggregation of compact core updates, applies nuclear-norm proximal shrinkage to suppress client-specific high-rank spectral components before upload, and allocates adaptation heads non-uniformly across layers using a concave water-filling budget rule derived from pretrained layer capacity. Because shrinkage is performed only on small core matrices, TRISHUL adds negligible computation and no extra per-round communication over the underlying multi-head PEFT protocol. Across vision and language benchmarks, including CIFAR-100, SVHN, 20~Newsgroups, MRQA, and GLUE with LLaMA3.2-1B, TRISHUL improves convergence, stability, and final performance over federated LoRA baselines, with greater gains under stronger heterogeneity.\footnote{See \href{https://github.com/Dipsan49/TRISHUL-Implementation} {Trishul Github} for implementation details.} %Spectral diagnostics show that TRISHUL reduces effective rank, spectral entropy, inter-client update variance, and subspace misalignment, supporting the proposed spectral-control interpretation.
\end{abstract}

\begin{IEEEkeywords}
Federated Learning, Parameter-Efficient Fine-Tuning, Low-Rank Adaptation, Nuclear Norm Regularization, Non-IID Learning
\end{IEEEkeywords}

\section{Introduction}

Large pretrained models have become the foundation of modern artificial intelligence, but adapting them efficiently in privacy-sensitive and resource-constrained edge environments remains a major challenge. In many practical deployments, data are generated locally on mobile devices, sensors, and edge platforms, where direct data sharing is undesirable or impossible due to privacy, bandwidth, and regulatory constraints. Federated learning (FL) addresses this challenge by enabling collaborative model training without centralizing raw data~\cite{mcmahan2017}. In parallel, parameter-efficient fine-tuning (PEFT), particularly low-rank adaptation (LoRA)~\cite{hu2021lora, liu2024efficient}, has emerged as a scalable approach for adapting large models using only a small number of trainable parameters. The combination of FL and PEFT~\cite{raje2025ravan} therefore offers a promising paradigm for the communication-efficient and privacy-preserving adaptation of foundation models at the edge.

% \begin{figure}
%     \centering
%     \includegraphics[width=0.49\linewidth]{Challenges.png}
%     \caption{Challenges in Federated PEFT}
%     \label{fig:placeholder}
% \end{figure}
Despite this promise, federated PEFT remains fundamentally challenging under realistic heterogeneous settings. Client data distributions are inherently non-IID, reflecting diverse user behaviors, environments, and tasks. This heterogeneity induces statistically inconsistent local updates, which can interfere destructively during aggregation and degrade global model performance. Classical federated optimization methods such as FedAvg~\cite{mcmahan2017}, FedProx~\cite{li2020fedprox} and SCAFFOLD~\cite{karimireddy2020scaffold} attempt to mitigate this issue through gradient correction, proximal regularization, or normalized aggregation. However, these approaches operate primarily at the level of optimization dynamics and do not explicitly control the \emph{structure} of parameter updates. The key problem is not only reducing trainable parameters, but ensuring that low-rank client updates remain geometrically compatible under aggregation.

\subsection{Motivation and Challenges}
Aforementioned limitations become critical in the context of LoRA. In PEFT, updates are constrained to low-dimensional subspaces, but under heterogeneous data, different clients may learn updates that occupy misaligned subspaces. Consequently, the aggregated update can exhibit \emph{spectral inconsistency}, where locally useful directions are globally incompatible. This phenomenon leads to high-variance updates and degraded transferability across clients. Recent methods such as multi-head LoRA~\cite{raje2025ravan} improve expressivity by introducing multiple subspaces per layer, but do not explicitly regulate the spectral geometry of these updates, leaving the aggregation instability unresolved.

These observations motivate a spectral-control formulation of federated PEFT. Rather than viewing the problem solely through the lens of optimization, we argue that heterogeneous federated fine-tuning should explicitly control update geometry under communication and resource constraints. From this perspective, two coupled failure modes arise: (i) \emph{aggregation interference}, caused by unstable and client-specific singular directions that increase the variance of aggregated updates, and (ii) \emph{capacity misallocation}, arising from uniform distribution of limited adaptation budget across layers despite significant variation in pretrained representational strength. Existing methods address these challenges only partially and in isolation.

To address the gap, we propose \textit{TRISHUL}, a three-pronged spectral-control framework for federated edge fine-tuning. It combines: (i) shared frozen multi-head bases for exact aggregation of compact core updates, (ii) nuclear-norm proximal shrinkage to suppress unstable client-specific spectral modes~\cite{recht2010,cai2010}, and (iii) concave water-filling to allocate a fixed adaptation budget across layers~\cite{boyd2004}. Together, these mechanisms improve aggregation fidelity and spectral stability without increasing per-round communication.

\subsection{Novelty and Contributions}

TRISHUL makes spectral inconsistency directly observable rather than inferring it from accuracy alone. We quantify client-update misalignment using principal-angle similarity, dominant singular-vector similarity, singular-value spectra, spectral entropy, effective rank, inter-client covariance, and aggregation variance. These diagnostics reveal whether non-IID clients learn incompatible low-rank subspaces and whether spectral shrinkage retains dominant shared modes while suppressing weak client-specific directions.

TRISHUL uses pretrained layer Frobenius norm as a stable, zero-cost allocation proxy and performs all spectral operations on compact $r\times r$ core matrices. It therefore preserves the communication cost of the underlying multi-head PEFT protocol while adding negligible local SVD overhead. The design also exposes a clear bias--variance tradeoff: insufficient shrinkage retains noisy modes, whereas excessive shrinkage removes useful task-specific structure.

The main contributions are:
\begin{itemize}
    \item We formulate heterogeneous federated PEFT as a \emph{spectral-control} problem, identifying update-subspace misalignment and inefficient layer-wise capacity allocation as coupled causes of aggregation instability.  We introduce client-side nuclear-norm proximal shrinkage on compact core matrices, enabling direct suppression of weak singular modes before aggregation while preserving the spectral structure of the corresponding full update.
    
    \item We combine shared-basis exact aggregation with a concave water-filling rule that allocates a fixed adaptation budget non-uniformly across layers without increasing per-round communication. We validate TRISHUL through accuracy, convergence, computational overhead, and direct spectral diagnostics, including subspace alignment, singular-value decay, effective rank, spectral entropy, and inter-client aggregation variance.
\end{itemize}

\begin{figure}[t!] 
    \centering
    \includegraphics[width=0.468\textwidth]{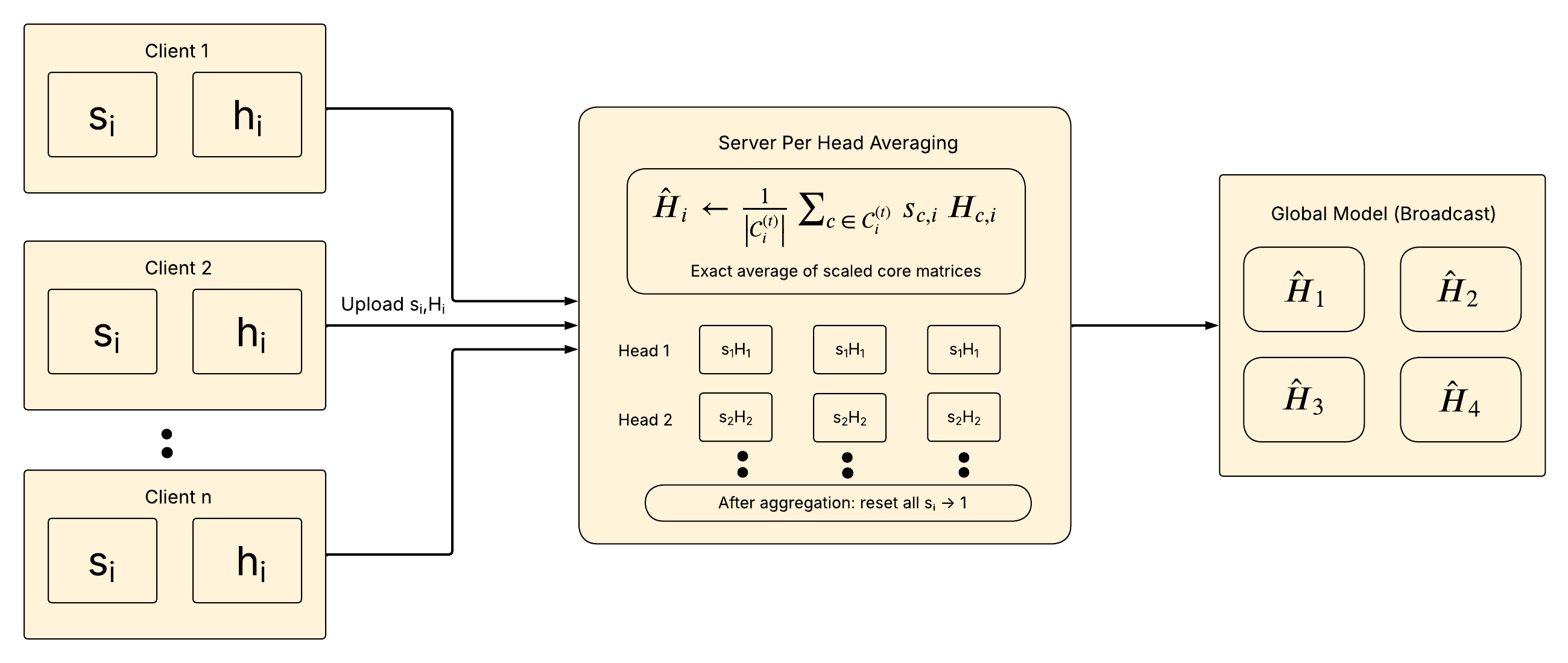}
    \caption{Multi-head low-rank aggregation (Prong 1). Each client uploads only the
    scaled core products $\{s_{c,i}\mathbf{H}_{c,i}\}$, which are averaged per head on
    the server and broadcast back as $\{\hat{\mathbf{H}}_1,\dots,\hat{\mathbf{H}}_4\}$.
    Because $\mathbf{B}_i$ and $\mathbf{A}_i$ are frozen and shared, this per-head
    average recovers the exact mean update in the ambient parameter space
    (Eq.~\eqref{eq:exact_agg}), eliminating the factor-wise aggregation bias
    $\mathbb{E}_c[\mathbf{B}_c\mathbf{A}_c]\neq\mathbb{E}_c[\mathbf{B}_c]\mathbb{E}_c[\mathbf{A}_c]$
    that corrupts standard LoRA under non-IID data. Server-side scalars are reset to
    $1$ after each round to prevent participation-frequency bias.}
    \label{fig:prong1}
\end{figure}

\section{Related Work}

\subsection{FL under Heterogeneity}

FL enables collaborative training across decentralized clients without sharing raw data~\cite{mcmahan2017}. A central challenge in FL is statistical heterogeneity, where client data distributions differ significantly, leading to biased and high-variance updates. A large body of work has focused on mitigating this issue through improved optimization strategies. FedProx~\cite{li2020fedprox} introduces proximal regularization to limit client drift, while SCAFFOLD~\cite{karimireddy2020scaffold} employs control variates to reduce client-gradient variance~\cite{liu2024efficient}.

While effective, these methods operate primarily at the level of gradient dynamics and do not explicitly regulate the \emph{structure} of parameter updates. In particular, they do not account for the geometric or spectral properties of updates, which become critical when adaptation is constrained to low-dimensional subspaces. As a result, even when optimization drift is reduced, aggregation can remain unstable due to structural misalignment of client updates.

\begin{table}[t]
\caption{TRISHUL relative to representative FL and PEFT methods.}
\label{tab:positioning}
\centering
\resizebox{\columnwidth}{!}{%
\begin{tabular}{lcccccc}
\toprule
Method & FL & PEFT & Exact agg. & Spectral shrink. & Layer alloc. & Edge heter. \\
\midrule
FedAvg~\cite{mcmahan2017} & \checkmark & -- & -- & -- & -- & Limited \\
FedProx~\cite{li2020fedprox} & \checkmark & -- & -- & -- & -- & Limited \\
SCAFFOLD~\cite{karimireddy2020scaffold} & \checkmark & -- & -- & -- & -- & Limited \\
FLoRA~\cite{wang2024flora} & \checkmark & \checkmark & Partial & -- & -- & Partial \\
FedEx-LoRA~\cite{singhal2024} & \checkmark & \checkmark & Partial & -- & -- & -- \\
Fed-SB~\cite{singhal2025} & \checkmark & \checkmark & -- & -- & -- & \checkmark \\
RAVAN~\cite{raje2025ravan} & \checkmark & \checkmark & \checkmark & -- & Limited & Partial \\
\textit{TRISHUL} & \checkmark & \checkmark & \checkmark & \checkmark & \checkmark & \checkmark \\
\bottomrule
\end{tabular}}
\end{table}
\subsection{PEFT in FL Settings}

PEFT methods, such as LoRA~\cite{hu2021lora}, enable the adaptation of large models by restricting updates to low-rank subspaces. This paradigm is particularly attractive in federated settings due to its reduced communication and memory footprint. Recent work extends PEFT to FL by adapting low-rank updates across clients. For example, FLoRA~\cite{wang2024flora} and related approaches~\cite{sun2024,qi2024fdlora,guo2025fedsa,cho2024,bai2024} address heterogeneity through personalized adaptation, selective aggregation, or heterogeneous rank allocation. FedEx-LoRA~\cite{singhal2024} improves aggregation by averaging low-rank updates directly, while Fed-SB~\cite{singhal2025} focuses on communication-efficient 
fine-tuning.\footnote{Exact agg. in this context (see Tab.~\ref{tab:positioning}) is a precise aggregation in ambient update space under shared bases/cores, not generic parameter averaging.}

Recent centralized PEFT methods such as AdaLoRA~\cite{zhang2023adalora},  DoRA~\cite{liu2024dora}, VeRA~\cite{kopiczko2024vera}, PiSSA~\cite{meng2024pissa}, and orthogonal LoRA variants such as  OLoRA~\cite{buyukakyuz2024olora} improve adaptation efficiency through adaptive rank allocation, weight decomposition, low-dimensional reparameterization, or improved initialization. These methods are complementary to TRISHUL. Their primary focus is centralized  parameterization quality or rank efficiency, whereas TRISHUL targets a federated failure mode: spectrally inconsistent client updates under non-IID data. In particular, TRISHUL  regulates the geometry of client updates before aggregation and allocates adaptation capacity across layers under a federated communication budget. More recently, RAVAN~\cite{raje2025ravan} introduces a multi-head low-rank parameterization to increase representational flexibility and enable exact aggregation of low-rank updates. Their approach largely treat low-rank updates as independent local adaptations and do not explicitly control the spectral structure. Under non-IID data, different clients may learn updates that span misaligned subspaces, leading to \emph{spectral inconsistency} and high-variance aggregation. Existing federated PEFT methods do not account for such limits.

\subsection{Spectral Regularization and LoRA}

Controlling the rank and spectral properties of model updates has been extensively studied in optimization and matrix recovery. Nuclear-norm regularization provides a convex surrogate for rank minimization and has been widely used in matrix completion and low-rank recovery~\cite{candes2009,recht2010}. Proximal methods based on \textit{singular value thresholding} (SVT) enable efficient optimization under nuclear-norm constraints~\cite{cai2010,beck2009}. 

In deep learning, the spectral properties of the weights and updates have been shown to influence generalization and stability~\cite{martin2021implicit}. However, these techniques are typically applied in centralized settings and have not been explicitly adapted to federated PEFT. In particular, previous work does not exploit spectral regularization to mitigate aggregation variance arising from heterogeneous client updates.

\subsection{Resource Allocation in Federated Setting}

Efficient allocation of limited computational and communication resources is a key concern in distributed learning systems. Prior work has explored adaptive communication strategies, model compression, and client selection to improve efficiency~\cite{boyd2004}. In federated PEFT, several methods consider heterogeneous resource constraints across clients~\cite{cho2024,bai2024}, including varying model capacity or partial updates.

However, the allocation of adaptation capacity across \emph{model layers} remains underexplored. Existing approaches typically assign uniform parameter budgets across layers, ignoring differences in pretrained representational strength. This can lead to inefficient use of limited adaptation capacity, particularly in deep transformer architectures, where layers contribute unevenly to downstream performance.

\subsection{Our Proposed Approach}

Our proposed TRISHUL differs from prior work by treating federated PEFT as a \emph{spectral-control} problem rather than only an optimization-drift or parameter-efficiency problem. The shared-basis multi-head structure is used as an aggregation substrate, while the primary methodological distinction is that TRISHUL regularizes the singular spectrum of each client update before upload and assigns adaptation capacity across layers under a fixed communication budget. This separates TRISHUL from RAVAN and similar PEFT approaches, which provides exact multi-head aggregation but does not explicitly shrink client-specific singular directions. To the best of our knowledge, TRISHUL is among the first federated PEFT frameworks to jointly control update geometry, aggregation variance, and layer-wise capacity allocation within one spectral framework.

\section{Design Details of Proposed TRISHUL}

We present \textit{TRISHUL}, a three-pronged spectral control framework for heterogeneous federated edge fine-tuning. TRISHUL is motivated by a central observation: in federated PEFT, performance degradation under non-IID data is driven not only by optimization drift, but also by the spectral inconsistency of client updates and the inefficient use of scarce adaptation capacity across layers. 

As shown in Fig.~\ref{fig:placeholder}, TRISHUL addresses these issues through exact multi-head low-rank aggregation, nuclear-norm proximal spectral shrinkage, and concave water-filling head allocation. The first ensures algebraically faithful aggregation, the second suppresses unstable client-specific singular directions before upload, and the third distributes a fixed adaptation budget toward layers with higher pretrained capacity. TRISHUL introduces no additional per-round communication beyond the underlying multi-head core-upload protocol.

\begin{figure}[h]
    \centering
    \includegraphics[width=0.85\linewidth]{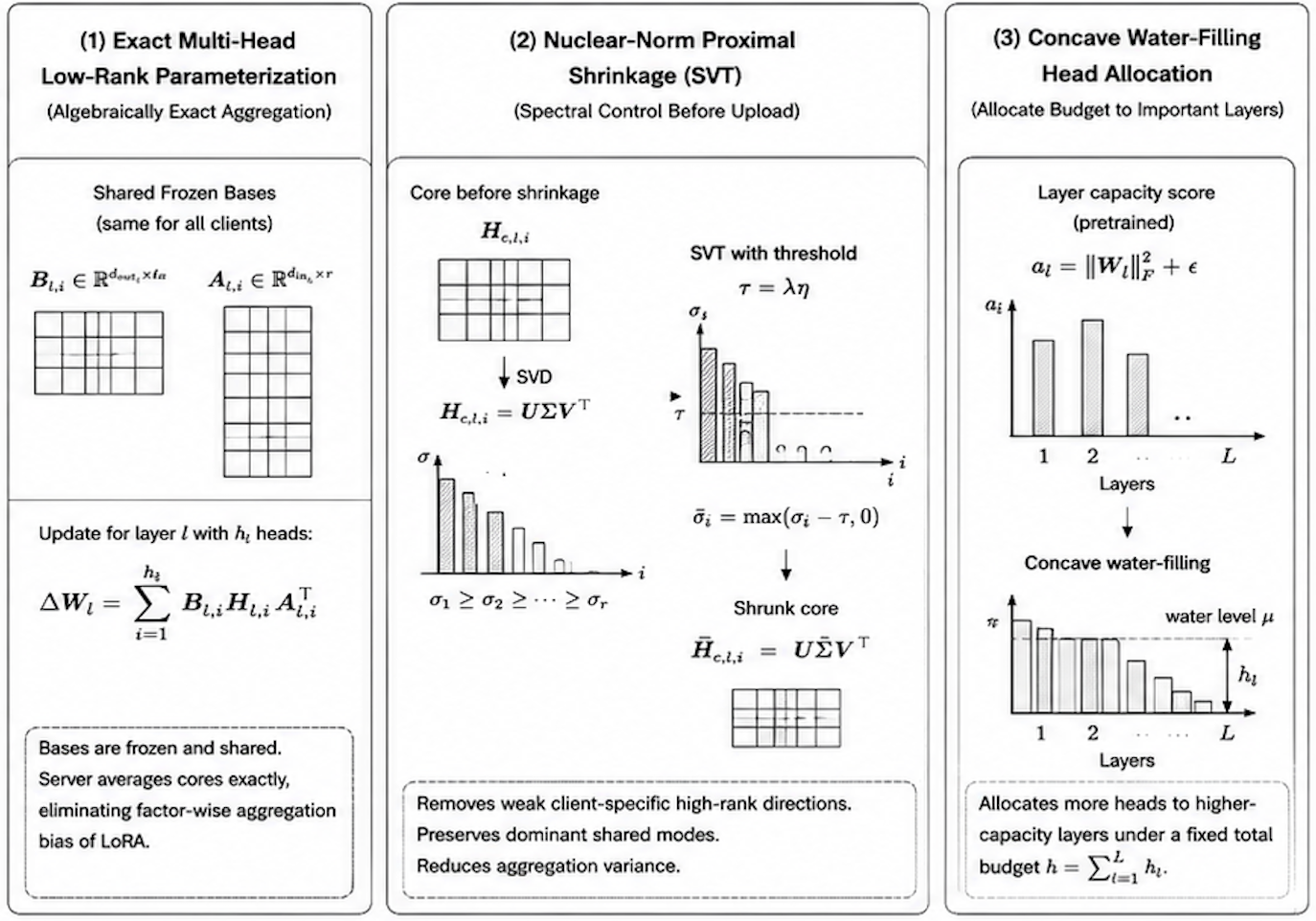}
    \caption{Three TRISHUL mechanisms applied during local client adaptation before server aggregation.}
    \label{fig:placeholder}
\end{figure}
\subsection{Problem Formulation and Spectral-Control Objective}
\label{subsec:problem}

Let $\mathcal{C}=\{1,\ldots,C\}$ denote the clients and let $\mathcal{D}_c$ be the local data distribution of client $c$. Federated fine-tuning minimizes
\begin{equation}
    F(\mathbf{W})
    =
    \sum_{c=1}^{C}p_c\,\mathcal{L}_c(\mathbf{W}),
    \qquad
    p_c=\frac{n_c}{\sum_{j=1}^{C}n_j},
    \label{eq:federated_objective}
\end{equation}
where $n_c$ is the local sample size and $\mathcal{L}_c$ is the empirical loss on client $c$. Non-IID heterogeneity means that $\mathcal{D}_c$ and $\mathcal{D}_{c'}$ may differ substantially, so locally optimized PEFT updates need not occupy compatible low-dimensional subspaces. Since $W_0$ and the bases are frozen, the actual trainable variables are $\theta={H_{l,i},s_{l,i}}$. For a pretrained layer $\mathbf{W}_l^0$, TRISHUL learns only an additive PEFT update
\begin{equation}
    \mathbf{W}_l
    =
    \mathbf{W}_l^0+\Delta\mathbf{W}_l,
    \qquad
    \Delta\mathbf{W}_l
    =
    \sum_{i=1}^{h_l}
    s_{l,i}\mathbf{B}_{l,i}\mathbf{H}_{l,i}\mathbf{A}_{l,i}.
    \label{eq:trishul_layer_update}
\end{equation}
The trainable communication budget (with state scalars folded and no added asymptotic cost) is
\begin{equation}
    \mathcal{B}
    =
    \sum_{l=1}^{L}h_l r^2,
    \label{eq:budget}
\end{equation}
because only the $r\times r$ core products are communicated during federated training. TRISHUL therefore seeks to minimize the federated loss while controlling three coupled quantities: the algebraic fidelity of aggregation, the spectral complexity of each client update, and the allocation of the fixed budget $\mathcal{B}$ across layers. 

\subsection{Multi-Head Low-Rank Parameterization}
\label{subsec:param}

Direct fine-tuning of all parameters of a large pretrained model on edge devices is generally infeasible. The memory footprint of full-parameter updates is too large, local optimization is too expensive, and repeated communication of dense weight updates is prohibitive under practical bandwidth constraints. PEFT addresses this difficulty by restricting adaptation to a low-dimensional update space~\cite{hu2021lora}. Among PEFT methods, LoRA has become a dominant approach due to its favorable memory and communication profile. However, its straightforward use in federated environments introduces an aggregation pathology that becomes increasingly severe under client heterogeneity. TRISHUL therefore builds on a multi-head low-rank parameterization that preserves PEFT efficiency while eliminating the factor-wise aggregation error of standard LoRA and creating a natural substrate for spectral regularization and adaptive capacity allocation.

\subsubsection{Background: LoRA and Its Federated Limitation}
\label{subsubsec:background}

LoRA adapts a pretrained weight matrix $\mathbf{W}\in\mathbb{R}^{d\times d}$ by adding a low-rank update of the form
\begin{equation}
    \mathbf{W} \leftarrow \mathbf{W} + \mathbf{B}\mathbf{A},
    \qquad
    \mathbf{B}\in\mathbb{R}^{d\times r},\;
    \mathbf{A}\in\mathbb{R}^{r\times d},\;
    r \ll d.
    \label{eq:lora}
\end{equation}
This reduces the number of trainable parameters from $d^2$ to $2dr$, which substantially lowers local memory consumption and communication cost. In centralized training, the two factors $\mathbf{B}$ and $\mathbf{A}$ are optimized jointly, so their product directly represents the learned update. In federated training, however, each client independently optimizes local copies $\mathbf{B}_c$ and $\mathbf{A}_c$, and the server must aggregate them across heterogeneous local data distributions.

A natural but problematic strategy is to average the factors separately and reconstruct the update as $\bar{\mathbf{B}}\bar{\mathbf{A}}$. Under non-IID data, this induces a systematic aggregation bias because the factors are statistically dependent on the local client distribution:
\begin{equation}
    \mathbb{E}_{c}\bigl[\mathbf{B}_{c}\mathbf{A}_{c}\bigr]
    \neq
    \mathbb{E}_{c}[\mathbf{B}_{c}]\,\mathbb{E}_{c}[\mathbf{A}_{c}].
    \label{eq:lora_bias}
\end{equation}
The discrepancy is governed by the cross-client covariance between $\mathbf{B}_{c}$ and $\mathbf{A}_{c}$ and generally grows as heterogeneity increases~\cite{raje2025ravan}. Consequently, factor-wise averaging underestimates the true mean update and introduces a distribution-dependent distortion into the global model. This limitation is particularly harmful at the edge, where local data are highly personalized and strongly non-IID. Although stacking-based aggregation strategies partially alleviate this problem~\cite{wang2024flora}, they increase complexity and still do not regulate the spectral structure of the resulting updates.

\subsubsection{Exact Multi-Head Adaptation}
\label{subsubsec:multihead}

To eliminate the aggregation bias in~\eqref{eq:lora_bias}, TRISHUL adopts a multi-head low-rank formulation inspired by recent federated PEFT designs~\cite{raje2025ravan}. For layer $l$, the model output is modified as
\begin{equation}
    \mathbf{W}\mathbf{x}
    \leftarrow
    \mathbf{W}\mathbf{x}
    +
    \sum_{i=1}^{h_l}
    s_i \,\mathbf{B}_i \mathbf{H}_i \mathbf{A}_i \mathbf{x},
    \label{eq:forward}
\end{equation}
where $h_l$ is the number of adaptation heads assigned to layer $l$, $\mathbf{B}_i\in\mathbb{R}^{d\times r}$ and $\mathbf{A}_i\in\mathbb{R}^{r\times d}$ are frozen basis matrices, $\mathbf{H}_i\in\mathbb{R}^{r\times r}$ is a trainable core matrix, and $s_i\in\mathbb{R}$ is a trainable scalar.

The frozen bases satisfy
\[
\mathbf{B}_i^\top\mathbf{B}_i=\mathbf{I}_r,
\qquad
\mathbf{A}_i\mathbf{A}_i^\top=\mathbf{I}_r,
\]
and are mutually orthogonal across heads:
\[
\mathbf{B}_i^\top\mathbf{B}_j=\mathbf{0},
\qquad
\mathbf{A}_i\mathbf{A}_j^\top=\mathbf{0},
\qquad i\neq j.
\]
These conditions ensure that each head spans a distinct low-dimensional subspace and that the collection of $h_l$ heads covers a rank-$h_l r$ adaptation space with maximal diversity for the chosen budget~\cite{raje2025ravan}. Since the bases are initialized once on the server, then frozen and shared identically across clients, they do not contribute to aggregation ambiguity.

The trainable core matrix $\mathbf{H}_i$ captures all task-specific adaptation inside the subspace defined by $(\mathbf{B}_i,\mathbf{A}_i)$. This reduces trainable parameters per head to only $r^2$, which is substantially smaller than directly adapting the ambient $d\times d$ matrix. The scalar $s_i$ provides a lightweight mechanism to modulate the contribution of each head according to local task relevance. To prevent participation-frequency bias across rounds, the server resets all global scalars to $1$ after aggregation.

\subsubsection{Initialization of Frozen Bases}
\label{subsubsec:init}

Because the frozen bases define the adaptation subspaces available throughout training, their initialization materially affects the expressive power and geometric diversity of the multi-head update family. TRISHUL considers three initialization strategies.

\textit{Gram--Schmidt orthogonalization} constructs maximally separated heads by iteratively projecting each new basis away from previously chosen directions. This yields exact inter-head orthogonality and is particularly effective in vision models, where broad spatial feature diversity benefits from highly separated adaptation subspaces.

\textit{Random normal initialization} draws entries of $\mathbf{B}_i$ and $\mathbf{A}_i$ from $\mathcal{N}(0,1)$ and then applies QR orthogonalization. This produces orthonormal bases with random orientations and a softer diversity constraint. For language models, whose pretrained representations are often anisotropic, this softer structure can better align with the geometry of the underlying representation manifold~\cite{martin2021implicit}.

\textit{Shared-subspace initialization} assigns multiple heads bases spanning the same underlying subspace, differing only by an invertible transformation. This reduces effective head diversity and serves mainly as a controlled ablation baseline.

In all cases, the core matrices are initialized as $\mathbf{H}_i=\mathbf{0}$ and the scalars as $s_i=1$, so the adaptation begins exactly from the pretrained model without any initial perturbation.

\subsubsection{Key Properties for Edge Federated Learning}

The multi-head formulation used by TRISHUL provides three properties that are especially important for federated edge deployment.

\paragraph{Exact aggregation.}
Because $\mathbf{B}_i$ and $\mathbf{A}_i$ are frozen and shared across all clients, aggregation becomes exact at the level of the trainable core products. Let $\mathcal{C}^{(t)}$ be the participating client set in round $t$. Then
\begin{align}
    &\frac{1}{|\mathcal{C}^{(t)}|}
    \sum_{c\in\mathcal{C}^{(t)}}
    \sum_{i=1}^{h_l}
    \mathbf{B}_i\bigl(s_{c,i}^{(t)}\mathbf{H}_{c,i}^{(t)}\bigr)\mathbf{A}_i
    \nonumber\\
    &=
    \sum_{i=1}^{h_l}
    \mathbf{B}_i
    \left(
    \frac{1}{|\mathcal{C}^{(t)}|}
    \sum_{c\in\mathcal{C}^{(t)}}
    s_{c,i}^{(t)}\mathbf{H}_{c,i}^{(t)}
    \right)
    \mathbf{A}_i.
    \label{eq:exact_agg}
\end{align}
Thus the server only needs to average the small matrices $s_{c,i}\mathbf{H}_{c,i}$, and the exact mean update in the ambient parameter space is recovered without approximation. This directly resolves the factor-wise aggregation bias of standard LoRA.

\paragraph{Handling computational heterogeneity}
Edge devices vary significantly in compute and memory capability. In the multi-head formulation, a constrained client may update only a subset of the assigned heads and freeze the remainder. Aggregation is then performed head-wise over only those clients that updated the corresponding head. This mechanism supports partial local adaptation without requiring protocol changes or additional coordination.

\paragraph{Communication efficiency}
Each client uploads only the products $\{s_{c,l,i}\mathbf{H}_{c,l,i}\}\in\mathbb{R}^{r\times r}$. The pretrained weights and frozen basis matrices are not communicated during normal training. Consequently, the per-round upload cost remains $\sum_{l=1}^{L} h_l r^2$ parameters, matching the underlying multi-head PEFT budget and preserving suitability for bandwidth-limited settings.

\begin{figure}[t!] 
    \centering
    \includegraphics[width=0.4\textwidth]{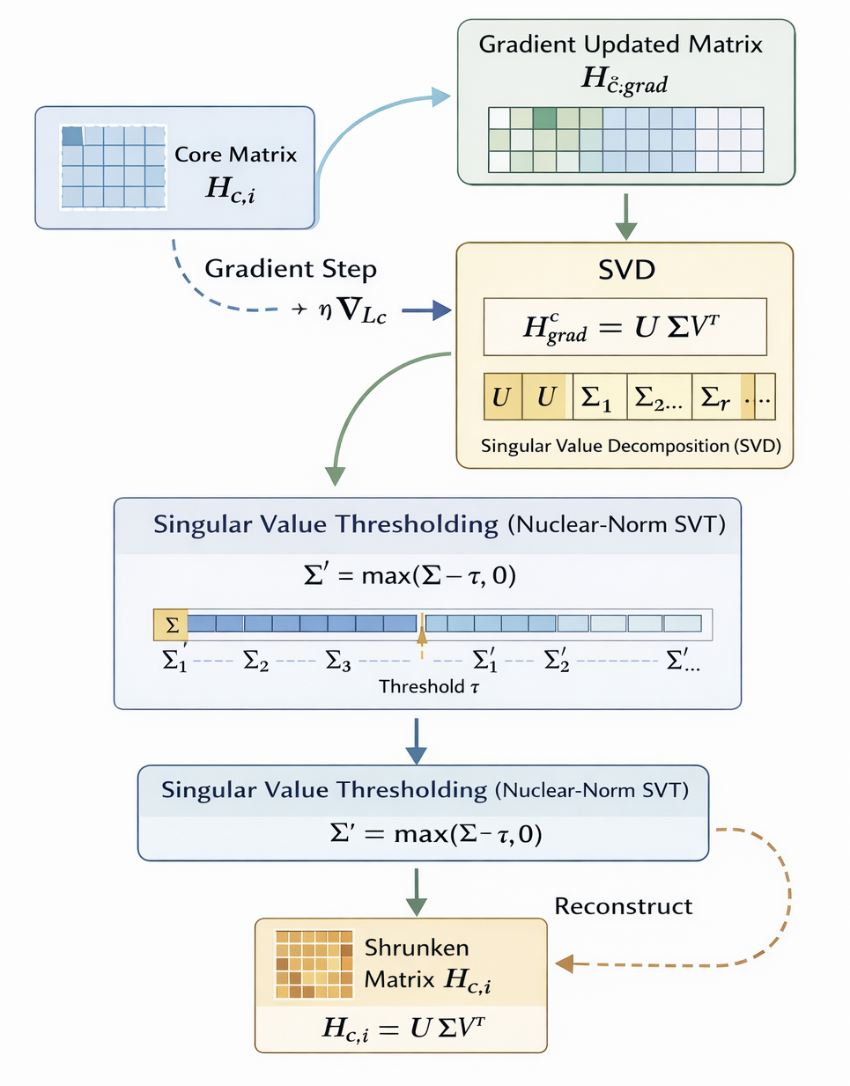}
    \caption{Proximal singular-value thresholding (Prong 2). Singular values below $\tau=\lambda\eta$ are removed, while dominant modes are retained with reduced magnitude. Because $\mathbf{B}_i$ and $\mathbf{A}_i$ are isometric, shrinkage on the $r\times r$ core corresponds to spectral regularization of the full update.}
    \label{fig:svt_prong2}
\end{figure}

\subsection{Client-Side Optimization with Nuclear-Norm}
\label{subsec:norm}

Although exact aggregation resolves the factorization bias of standard LoRA, it does not by itself prevent destructive interference among heterogeneous client updates. Under non-IID data, local fine-tuning tends to produce update directions that are highly specific to individual client distributions. When these directions are numerous and spectrally diffuse, their aggregation can have high variance even if the aggregation rule is exact. Standard drift-control methods, such as proximal penalties or control variates~\cite{li2020fedprox,karimireddy2020scaffold}, primarily constrain update magnitude or gradient dynamics. They do not explicitly control the spectral complexity of the learned update itself. TRISHUL addresses this failure mode by imposing direct spectral regularization on the trainable core matrices.

The central idea is to discourage unstable high-rank local adaptations while preserving dominant low-dimensional directions that are more likely to transfer across clients. To achieve this, TRISHUL penalizes the nuclear norm
\[
\|\cdot\|_*=\sum_j \sigma_j(\cdot),
\]
which is the tightest convex surrogate for rank minimization~\cite{recht2010,candes2009}. A key advantage of the multi-head parameterization is that regularizing the small core matrix is exactly equivalent to regularizing the full-dimensional update. Since $\mathbf{B}_i$ and $\mathbf{A}_i$ are orthonormal, they act as isometric embeddings and preserve singular values:
\begin{equation}
    \|\mathbf{B}_i\mathbf{H}_{c,i}\mathbf{A}_i\|_*
    =
    \|\mathbf{H}_{c,i}\|_*.
    \label{eq:norm_equiv}
\end{equation}
Therefore, imposing a nuclear-norm penalty on $\mathbf{H}_{c,i}$ directly constrains the spectral complexity of the full update in the ambient parameter space, while requiring only an $r\times r$ singular value decomposition.

For client $c$, the local optimization problem becomes
\begin{equation}
    \min_{\{\mathbf{H}_{c,i},s_{c,i}\}}
    \mathcal{L}_c\bigl(\{\mathbf{H}_{c,i},s_{c,i}\}\bigr)
    +
    \lambda\sum_{i=1}^{h_l}\|\mathbf{H}_{c,i}\|_*,
    \label{eq:local_obj}
\end{equation}
where $\mathcal{L}_c$ is the task loss on local data and $\lambda>0$ controls the strength of spectral regularization. Because the nuclear norm is non-smooth, TRISHUL employs proximal gradient descent. Each local step consists of a standard gradient update on the task loss followed by a proximal singular-value thresholding (SVT) step:
\begin{align}
    \mathbf{H}_{c,i}^{\mathrm{grad}}
    &=
    \mathbf{H}_{c,i}
    -
    \eta \nabla_{\mathbf{H}_{c,i}}\mathcal{L}_c,
    \label{eq:grad_step}
    \\
    \mathbf{H}_{c,i}
    &\leftarrow
    \operatorname{prox}_{\lambda\eta\|\cdot\|_*}
    \bigl(\mathbf{H}_{c,i}^{\mathrm{grad}}\bigr),
    \label{eq:prox_step}
    \\
    s_{c,i}
    &\leftarrow
    s_{c,i}
    -
    \eta \nabla_{s_{c,i}}\mathcal{L}_c.
    \label{eq:scalar_step}
\end{align}
If $\mathbf{X}=\mathbf{U}\operatorname{diag}(\sigma_j)\mathbf{V}^\top$, then the proximal operator is
\begin{equation}
    \operatorname{prox}_{\tau\|\cdot\|_*}(\mathbf{X})
    =
    \mathbf{U}\operatorname{diag}\bigl(\max(\sigma_j-\tau,0)\bigr)\mathbf{V}^\top,
    \qquad
    \tau=\lambda\eta.
    \label{eq:svt}
\end{equation}
Hence every singular value below the threshold $\tau$ is removed, while larger singular values are shrunk toward zero. This suppresses client-specific high-rank directions and retains only the dominant spectral modes.

The variance-reduction effect follows from the rank control induced by SVT. Let
\[
\widetilde{\mathbf{H}}_{c,i}
=
\operatorname{prox}_{\lambda\eta\|\cdot\|_*}
(\mathbf{H}^{\mathrm{grad}}_{c,i})
\]
denote the post-SVT core matrix and suppose $|s_{c,i}|\leq s_{\max}$ and
$\|\widetilde{\mathbf{H}}_{c,i}\|_2\leq M_i$ for all participating clients. If
\[
\rho_i
=
\bigl|\{j:\sigma_j(\mathbf{H}_{c,i}^{\mathrm{grad}})>\lambda\eta\}\bigr|
\]
is the maximum retained rank after thresholding, then
\begin{equation}
    \|\widetilde{\mathbf{H}}_{c,i}\|_F^2
    =
    \sum_{j=1}^{\rho_i}\widetilde{\sigma}_{j}^{\,2}
    \leq
    \rho_i M_i^2.
    \label{eq:rank_frobenius}
\end{equation}
For $m=|\mathcal{C}^{(t)}|$ independently sampled clients, the per-head aggregation variance satisfies
\begin{equation}
    \mathbb{E}
    \left\|
    \frac{1}{m}
    \sum_{c\in\mathcal{C}^{(t)}}
    \left(
    s_{c,i}\widetilde{\mathbf{H}}_{c,i}
    -
    \mathbb{E}[s_{c,i}\widetilde{\mathbf{H}}_{c,i}]
    \right)
    \right\|_F^2
    \leq
    \frac{s_{\max}^2\rho_i M_i^2}{m}.
    \label{eq:variance_bound}
\end{equation}
Thus, shrinking weak singular modes reduces the effective rank $\rho_i$ and tightens the aggregation-variance bound. This also exposes the bias--variance tradeoff: moderate shrinkage suppresses client-specific spectral noise, whereas overly aggressive shrinkage may remove task-relevant singular directions.
\begin{figure}[t!] 
    \centering
    \includegraphics[width=0.48\textwidth]{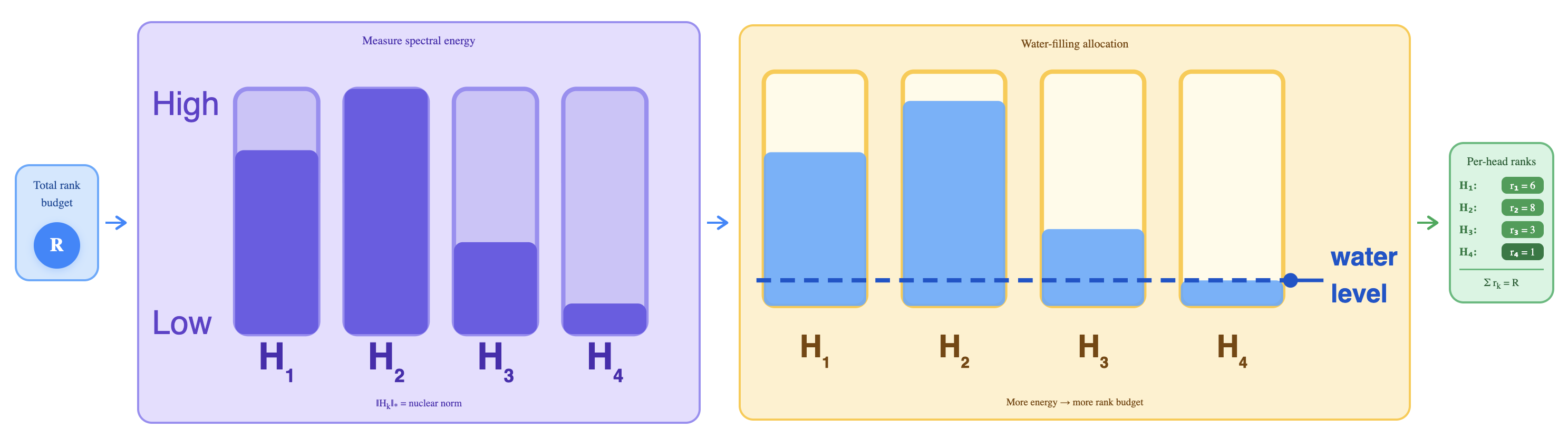}
    \caption{Concave water-filling head allocation (Prong 3). Each layer's
    pretrained capacity score $a_l=\|\mathbf{W}_l\|_F^2+\epsilon$ is measured and
    used to solve the diminishing-return allocation problem in
    Eq.~\eqref{eq:waterfill_opt}. High-capacity layers receive more adaptation
    heads, while the logarithmic utility prevents the full budget from collapsing
    onto a single layer. The resulting integer head counts are obtained by
    residual rounding of the continuous solution in Eq.~\eqref{eq:waterfill_sol}.}
    \label{fig:allocation_prong3}
\end{figure}

\subsection{Concave Water-Filling Head Allocation}
\label{sec:waterfill}

Exact aggregation and spectral regularization stabilize the uploaded updates, but they do not determine how a limited adaptation budget should be distributed across layers. Uniformly assigning the same number of heads to every layer is rarely optimal because pretrained layers differ in norm, abstraction level, and downstream sensitivity~\cite{dosovitskiy2021,raffel2020}. TRISHUL therefore uses a concave water-filling rule that allocates more heads to layers with larger pretrained capacity while preventing degenerate concentration of the entire budget in one layer.

Let $h=\sum_{l=1}^{L}h_l$ be the total head budget and define a positive layer-capacity score
\begin{equation}
    a_l
    =
    \|\mathbf{W}_l\|_F^2+\epsilon,
    \qquad
    \epsilon>0.
    \label{eq:layer_capacity}
\end{equation}
We allocate continuous heads by solving the strictly concave program
\begin{align}
    \max_{\{h_l\geq 0\}}
    \;&
    \sum_{l=1}^{L}
    a_l\log(1+h_l)
    \nonumber\\
    \text{s.t.}
    \;&
    \sum_{l=1}^{L}h_l=h.
    \label{eq:waterfill_opt}
\end{align}
The logarithmic utility models diminishing returns: layers with larger $a_l$ receive more capacity, but the marginal gain decreases as more heads are assigned. The KKT conditions give
\begin{equation}
    \frac{a_l}{1+h_l^*}-\mu\leq 0,
    \qquad
    h_l^*
    =
    \left[\frac{a_l}{\mu}-1\right]_+,
    \label{eq:waterfill_sol}
\end{equation}
where $[x]_+=\max(x,0)$ and the water level $\mu>0$ is chosen so that $\sum_l h_l^*=h$. This is the standard water-filling form: layers below the water level receive no additional heads, while high-capacity layers receive capacity proportional to their excess above the threshold.

Because head counts must be integer, TRISHUL uses the continuous solution only as a budget score. We set
\begin{equation}
    \widehat{h}_l
    =
    \left\lfloor h_l^* \right\rfloor,
    \qquad
    R
    =
    h-\sum_{l=1}^{L}\widehat{h}_l,
    \label{eq:rounding_floor}
\end{equation}
and assign the remaining $R$ heads to the layers with the largest fractional residues $h_l^*-\lfloor h_l^*\rfloor$. This preserves the total budget exactly and is computed once before federated training.

The allocation is optimal for the concave utility in~\eqref{eq:waterfill_opt}. Therefore, for the uniform allocation $h_l^{\mathrm{unif}}=h/L$,
\begin{equation}
    \sum_{l=1}^{L}
    a_l\log(1+h_l^*)
    \geq
    \sum_{l=1}^{L}
    a_l\log(1+h_l^{\mathrm{unif}}),
    \label{eq:waterfill_dominates}
\end{equation}
with strict inequality whenever the uniform allocation violates the KKT conditions and the layer capacities are not identical. This provides a principled non-uniform allocation rule without adding runtime or communication overhead.
\subsection{Server-Side Aggregation}
\label{subsec:agg}

After local training, each client uploads the compressed products $\{s_{c,l,i}\mathbf{H}_{c,l,i}\}$ for the heads it updated. Let $\mathcal{C}_{l,i}^{(t)}$ be the set of clients that updated head $i$ in layer $l$ at round $t$. The server aggregates the uploaded cores using data-size weights
\begin{equation}
    \mathbf{H}_{l,i}^{(t+1)}
    =
    \sum_{c\in\mathcal{C}_{l,i}^{(t)}}
    \omega_{c,l,i}^{(t)}
    s_{c,l,i}^{(t)}\mathbf{H}_{c,l,i}^{(t)},
    \qquad
    \omega_{c,l,i}^{(t)}
    =
    \frac{n_c}
    {\sum_{j\in\mathcal{C}_{l,i}^{(t)}}n_j}.
    \label{eq:server_aggregation}
\end{equation}
When all clients have equal local sample sizes,~\eqref{eq:server_aggregation} reduces to the equal-weight average used in the controlled simulations. The aggregation is head-wise, so clients with limited compute may update only a subset of heads without changing the protocol.

The exactness of aggregation follows from the shared-basis structure. For any fixed set of weights $\omega_c$ satisfying $\sum_c\omega_c=1$,
\begin{align}
    &\sum_{c\in\mathcal{C}^{(t)}}\omega_c
    \sum_{i=1}^{h_l}
    \mathbf{B}_{l,i}
    (s_{c,l,i}\mathbf{H}_{c,l,i})
    \mathbf{A}_{l,i}
    \nonumber\\
    &=
    \sum_{i=1}^{h_l}
    \mathbf{B}_{l,i}
    \left(
    \sum_{c\in\mathcal{C}^{(t)}}\omega_c
    s_{c,l,i}\mathbf{H}_{c,l,i}
    \right)
    \mathbf{A}_{l,i}.
    \label{eq:exact_agg_server}
\end{align}
Thus the server recovers the exact weighted mean of the client updates in the ambient parameter space. This exactness removes algebraic factorization bias; it does not by itself eliminate statistical heterogeneity, which is why TRISHUL also applies client-side spectral shrinkage. After aggregation, all server-side scalars are reset to $1$ to avoid round-to-round accumulation effects and participation-frequency bias.

\subsection{Overall Algorithm}

Algorithm~\ref{alg:trishul} summarizes TRISHUL. The algorithm explicitly separates one-time allocation and basis initialization from per-round client optimization and server aggregation. SVT is applied after each local gradient step on active heads; in practice, it can also be applied every few local steps to reduce overhead when $r$ is large.

\begin{algorithm}
\caption{TRISHUL: Three-Pronged Spectral Control for Federated Edge Fine-Tuning}
\label{alg:trishul}
\begin{algorithmic}[1]
\Require Clients $\mathcal{C}$, layers $L$, total head budget $h$, rank $r$, rounds $T$, local steps $S$, learning rate $\eta$, shrinkage $\lambda$, scalar bound $s_{\max}$.
\Statex \textit{// Server initialization}
\State Compute layer scores $a_l=\|\mathbf{W}_l^0\|_F^2+\epsilon$.
\State Solve~\eqref{eq:waterfill_opt} and obtain integer $\{h_l\}_{l=1}^{L}$ using~\eqref{eq:rounding_floor}.
\State Initialize orthonormal bases $\mathbf{B}_{l,i},\mathbf{A}_{l,i}$; set $\mathbf{H}_{l,i}\leftarrow\mathbf{0}$ and $s_{l,i}\leftarrow1$.
\State Freeze $\mathbf{W}_l^0$, $\mathbf{B}_{l,i}$, and $\mathbf{A}_{l,i}$.
\For{$t=1,\ldots,T$}
    \State Sample participating clients $\mathcal{C}^{(t)}\subseteq\mathcal{C}$ and broadcast $\{\mathbf{H}_{l,i},s_{l,i}\}$.
    \For{each client $c\in\mathcal{C}^{(t)}$ in parallel}
        \State Select active-head mask $m_{c,l,i}^{(t)}\in\{0,1\}$ according to local compute budget.
        \State Initialize $\mathbf{H}_{c,l,i}\leftarrow\mathbf{H}_{l,i}$ and $s_{c,l,i}\leftarrow s_{l,i}$.
        \For{$\tau=1,\ldots,S$}
            \For{each active head $(l,i)$ with $m_{c,l,i}^{(t)}=1$}
                \State $\mathbf{H}_{c,l,i}^{\mathrm{grad}}\leftarrow \mathbf{H}_{c,l,i}-\eta\nabla_{\mathbf{H}_{c,l,i}}\mathcal{L}_c$.
                \State Compute SVD $\mathbf{H}_{c,l,i}^{\mathrm{grad}}=\mathbf{U}\boldsymbol{\Sigma}\mathbf{V}^{\top}$.
                \State $\mathbf{H}_{c,l,i}\leftarrow \mathbf{U}(\boldsymbol{\Sigma}-\lambda\eta\mathbf{I})_+\mathbf{V}^{\top}$.
                \State $s_{c,l,i}\leftarrow \operatorname{clip}(s_{c,l,i}-\eta\nabla_{s_{c,l,i}}\mathcal{L}_c,0,s_{\max})$.
            \EndFor
        \EndFor
        \State Upload $\{m_{c,l,i}^{(t)}s_{c,l,i}\mathbf{H}_{c,l,i}\}$.
    \EndFor
    \For{each head $(l,i)$ with $\mathcal{C}_{l,i}^{(t)}\neq\emptyset$}
        \State Aggregate $\mathbf{H}_{l,i}^{(t+1)}$ using~\eqref{eq:server_aggregation}.
        \State Reset $s_{l,i}\leftarrow1$.
    \EndFor
\EndFor
\end{algorithmic}
\end{algorithm}

\subsection{Complexity, Privacy Scope, and Deployment Considerations}
\label{subsec:complexity_privacy}

TRISHUL has the same per-round communication order as the underlying multi-head PEFT protocol. For layer $l$, each active client uploads and downloads $h_l r^2$ trainable core parameters, so the total communicated trainable state is
\begin{equation}
    \mathcal{O}\!\left(\sum_{l=1}^{L}h_l r^2\right).
    \label{eq:comm_complexity}
\end{equation}
The additional local computation comes from SVD on $r\times r$ cores, yielding $\mathcal{O}(h_l r^3)$ per layer per proximal step. Since $r\ll d_l$, this overhead is small compared with the forward and backward passes through the frozen pretrained model.

TRISHUL follows the standard FL assumption that raw client data are not shared. However, uploaded core matrices may still leak distributional or task-specific information; nuclear-norm shrinkage should not be interpreted as a formal privacy mechanism. TRISHUL is compatible with secure aggregation, update clipping, client-level differential privacy, encrypted aggregation, and post-quantum-secure communication, but these mechanisms are orthogonal to the spectral-control contribution and are left for future work.

The method naturally supports edge heterogeneity through active-head masks: constrained clients may update fewer heads, while the server aggregates each head over the clients that actually updated it. Real deployments may additionally require quantized uploads, straggler handling, intermittent connectivity support, and energy-aware client selection.

\section{Experiments}
\label{sec:experiments}

We evaluate \textit{TRISHUL} on a diverse set of vision and language tasks under federated settings with both statistical and computational heterogeneity. Our goal is to test whether explicit spectral control improves robustness, convergence, and final performance in realistic federated PEFT regimes. We compare TRISHUL against strong and representative federated PEFT baselines, including FedIT~\cite{zhang2024}, FedEx-LoRA~\cite{singhal2024}, FFA-LoRA~\cite{sun2024}, Fed-SB~\cite{singhal2025}, RAVAN~\cite{raje2025ravan}, and SCAFFOLD+LoRA~\cite{karimireddy2020scaffold}, where the latter combines a classical variance-reduction strategy with LoRA. To ensure fair comparison, we follow the same experimental protocol as RAVAN wherever possible, including datasets, models, federated partitioning, and optimization settings, while varying only the TRISHUL-specific spectral regularization parameter $\lambda$ and retaining the proposed concave water-filling allocation and exact multi-head aggregation design.

\subsection{Experimental Setup}
\label{subsec:setup}

\subsubsection{Datasets and Models}

For image classification, we use ViT-B/16~\cite{dosovitskiy2021} with 85M parameters on CIFAR-100~\cite{krizhevsky2009} and SVHN~\cite{netzer2011}. CIFAR-100 contains 50K training and 10K test images across 100 classes, while SVHN contains 73K training and 26K test samples across 10 classes. These two benchmarks provide complementary visual regimes: CIFAR-100 is more semantically diverse and structurally difficult, whereas SVHN is less class-diverse but still useful for testing robustness under heterogeneity.

For language tasks, we fine-tune T5-Base~\cite{raffel2020} with 224M parameters on 20~Newsgroups~\cite{mitchell1997} and MRQA~\cite{fisch2019}. 20~Newsgroups contains approximately 11K training and 7.5K test documents spanning 20 topics, while MRQA contains 516K training and 58K test examples aggregated across six question-answering sources. These tasks test TRISHUL under both document classification and extractive question-answering settings. To assess scalability to foundation-scale models, we additionally evaluate on the GLUE benchmark~\cite{wang2018} using LLaMA3.2-1B~\cite{grattafiori2024}.

\subsubsection{Federated Partitioning}

We simulate federated learning with $|\mathcal{C}| \in \{20,50\}$ clients. Under IID partitioning, each client receives an equal-sized random subset of the training data. Under non-IID partitioning, client-specific class proportions are sampled from a Dirichlet distribution with concentration parameter $\alpha = 0.3$. Smaller values of $\alpha$ produce stronger heterogeneity, so we additionally consider $\alpha \in \{0.1,0.05\}$ in Section~\ref{subsubsec:extreme_heterogeneity}. For MRQA, the Dirichlet split is applied over the six constituent sub-datasets. In each round, the server uniformly samples three clients, and each selected client performs $S=50$ local iterations. The number of communication rounds is set to 50 for vision tasks, 100 for 20~Newsgroups, and 20 for MRQA and GLUE. Batch sizes and sequence lengths follow the settings used in RAVAN~\cite{raje2025ravan}.

\subsubsection{Parameter Budget and Hyperparameter}

We evaluate two trainable parameter budgets. The lower budget matches the trainable-parameter count of vanilla LoRA with rank $r=32$, and the higher budget matches vanilla LoRA with rank $r=64$. For RAVAN and TRISHUL, the trainable state consists of $h=4$ heads with core matrices $\mathbf{H}_i\in\mathbb{R}^{r\times r}$, with the total number of trainable parameters matched to the corresponding baseline budget. In Tables~\ref{tab:vision} and~\ref{tab:language}, the column \textit{Budget} denotes the parameter-budget-equivalent rank reported by each method; it should not be interpreted as the literal matrix rank for all baselines. This ensures that performance differences primarily reflect spectral control and allocation quality rather than increased capacity.

For TRISHUL, the spectral regularization coefficient is selected by grid search over $\lambda \in \{0.001, 0.01, 0.1, 1.0\}$, with $\lambda = 0.01$ providing the best and most stable performance across tasks. The full sensitivity study is reported in Section~\ref{subsubsec:lambda_sensitivity}. Learning rates are tuned per method and per budget. Unless otherwise stated, all results are averaged over three random seeds.

\subsubsection{Baseline implementation}
For a fair and controlled comparison, we implemented all baselines within the same experimental codebase as TRISHUL rather than relying on separately released implementations. Each baseline was reproduced from its corresponding paper and evaluated under the same federated protocol, data partitioning strategy, client sampling procedure, model backbone, parameter budget, number of communication rounds, local update steps, and evaluation pipeline used for TRISHUL. Method-specific hyperparameters were tuned following the ranges reported in the original papers where available, while shared training settings were kept identical across methods.
This ensures that performance differences primarily reflect the adaptation and aggregation mechanisms of each method rather than implementation-level or experimental-protocol differences.

\emph{Implementation details for reproducibility.}
All reported results are averaged over three independent random seeds. Learning rates, weight decay, local batch sizes, sequence lengths, LoRA target modules, and method-specific hyperparameter ranges are kept fixed across methods within each dataset-budget setting. For GLUE with LLaMA3.2-1B, the same prompt format, label verbalizers, maximum sequence length, and target adaptation modules are used across all PEFT baselines. Detailed seed values, hardware configuration, optimizer settings, and hyperparameter ranges will be released with the code to ensure exact reproducibility.

\begin{figure}
    \centering
    \includegraphics[width=\columnwidth]{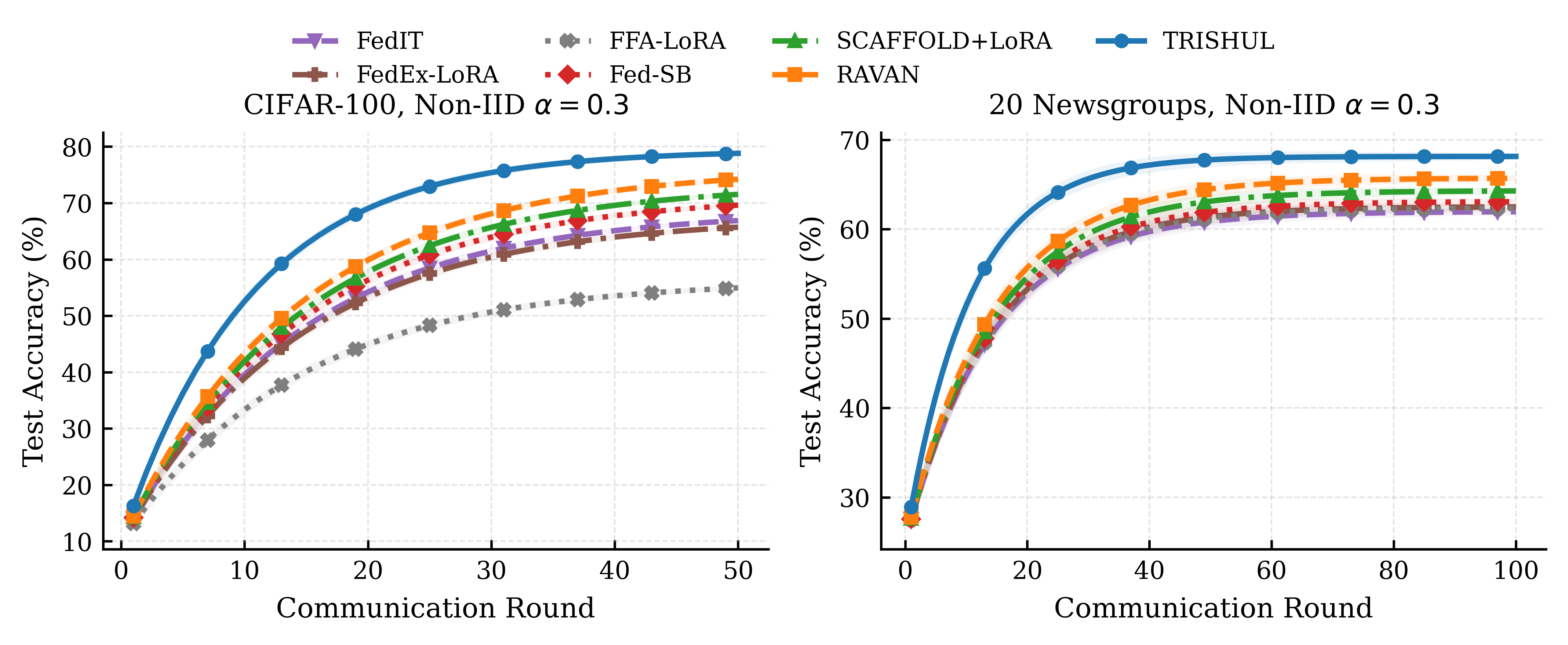}
    \caption{Federated training curves on CIFAR-100 (left) and 20~Newsgroups (right) under non-IID data ($\alpha=0.3$) with 20 clients and the lower parameter budget ($r=32$). TRISHUL converges faster and reaches a higher final accuracy than all baselines in both settings. % $79.41\%$ vs.\ RAVAN's $76.22\%$ on CIFAR-100, and $68.14\%$ vs.\ $65.73\%$ on 20~Newsgroups. The performance gap opens in the early communication rounds, demonstrating that nuclear-norm proximal shrinkage stabilizes aggregation from the outset of training.
    }
\label{fig:convergence}
\end{figure}

\subsection{Main Results: Vision and Language}
\label{subsec:main}

\subsubsection{Vision and Language Benchmarks}

Tables~\ref{tab:vision} and~\ref{tab:language} compare TRISHUL against all baselines on vision and language tasks. Across all datasets and parameter budgets, TRISHUL achieves the best performance, with gains that become more pronounced as client heterogeneity increases. This is consistent with the central claim of the paper: explicit spectral control is most valuable when local client updates are structurally inconsistent.

On CIFAR-100, TRISHUL improves over the strongest competing baseline by $3.2$--$3.5\%$ under non-IID partitions and by $0.7$--$0.8\%$ under IID partitions. On SVHN, the gains are smaller under IID but still consistent, while under non-IID they increase to as much as $2.8\%$. On 20~Newsgroups, TRISHUL delivers non-IID gains of roughly $2.4$--$2.5\%$, substantially larger than its IID gains, again indicating that the method is particularly effective when update spectra diverge across clients. On MRQA, the gains are smaller in absolute magnitude, but remain consistent across budgets and client counts, demonstrating that the method remains beneficial even on more structured language tasks.

A key observation is that SCAFFOLD+LoRA, despite explicitly correcting optimization drift, remains consistently below TRISHUL under heterogeneous settings. This suggests that gradient-level stabilization alone is insufficient for federated PEFT when the spectral geometry of updates is poorly aligned. By contrast, TRISHUL directly regularizes the update spectrum, suppressing unstable client-specific directions before aggregation.

Figure~\ref{fig:convergence} reports test accuracy versus communication rounds for CIFAR-100 and 20~Newsgroups under non-IID conditions with 20 clients and the lower parameter budget ($r=32$). In CIFAR-100, TRISHUL reaches $79.41\%$ by round 50, while the strongest baseline RAVAN plateaus at $76.22\%$; the gap opens as early as round 10, indicating that proximal shrinkage of the nuclear-norm reduces destructive interference from aggregation from the first communication rounds. On 20~Newsgroups, TRISHUL converges to $68.14\%$ versus RAVAN's $65.73\%$ by round 100, with a similarly early separation. In both settings, SCAFFOLD+LoRA, despite explicit gradient-variance correction, converges more slowly than TRISHUL, suggesting that gradient-level stabilization alone is insufficient when client updates are spectrally misaligned.

\subsubsection{Scaling to LLaMA3.2-1B on GLUE}

Table~\ref{tab:glue} shows that the benefits of TRISHUL persist at foundation-model scale. On the GLUE benchmark with LLaMA3.2-1B, TRISHUL achieves an average score of $84.66\%$, outperforming all baselines including SCAFFOLD+LoRA and Fed-SB. The largest individual gain occurs on RTE, where TRISHUL reaches $68.05\%$, the best result among all compared methods.

These results suggest that the proposed spectral control mechanism is not limited to moderate-scale models or a particular modality. Instead, it appears to generalize across model families, tasks, and scales. This is important because large-model federated fine-tuning is precisely where communication efficiency and stable adaptation become most critical.

Figure~\ref{fig:overhead} shows that TRISHUL's improvements come at negligible additional computational cost. Across ViT-B/16 (85M), T5-Base (224M), and LLaMA3.2-1B (1B), the per-round training time of TRISHUL remains close to that of RAVAN, with overheads of only $1.09\%$, $0.96\%$, and $0.83\%$, respectively. The absolute per-round training times are 18.6\,s, 31.5\,s, and 85.4\,s for TRISHUL, compared with 18.4\,s, 31.2\,s, and 84.7\,s for RAVAN. This small increase is seen because TRISHUL performs an additional proximal SVD step during local optimization. Since the SVD is applied only to the small $r \times r$ core matrices rather than to the full weight tensors, the additional computational cost is bounded by $\mathcal{O}(r^3)$ per head per proximal step and remains insignificant at billion-parameter scale.

\begin{figure}
    \centering
    \includegraphics[width=0.8\columnwidth]{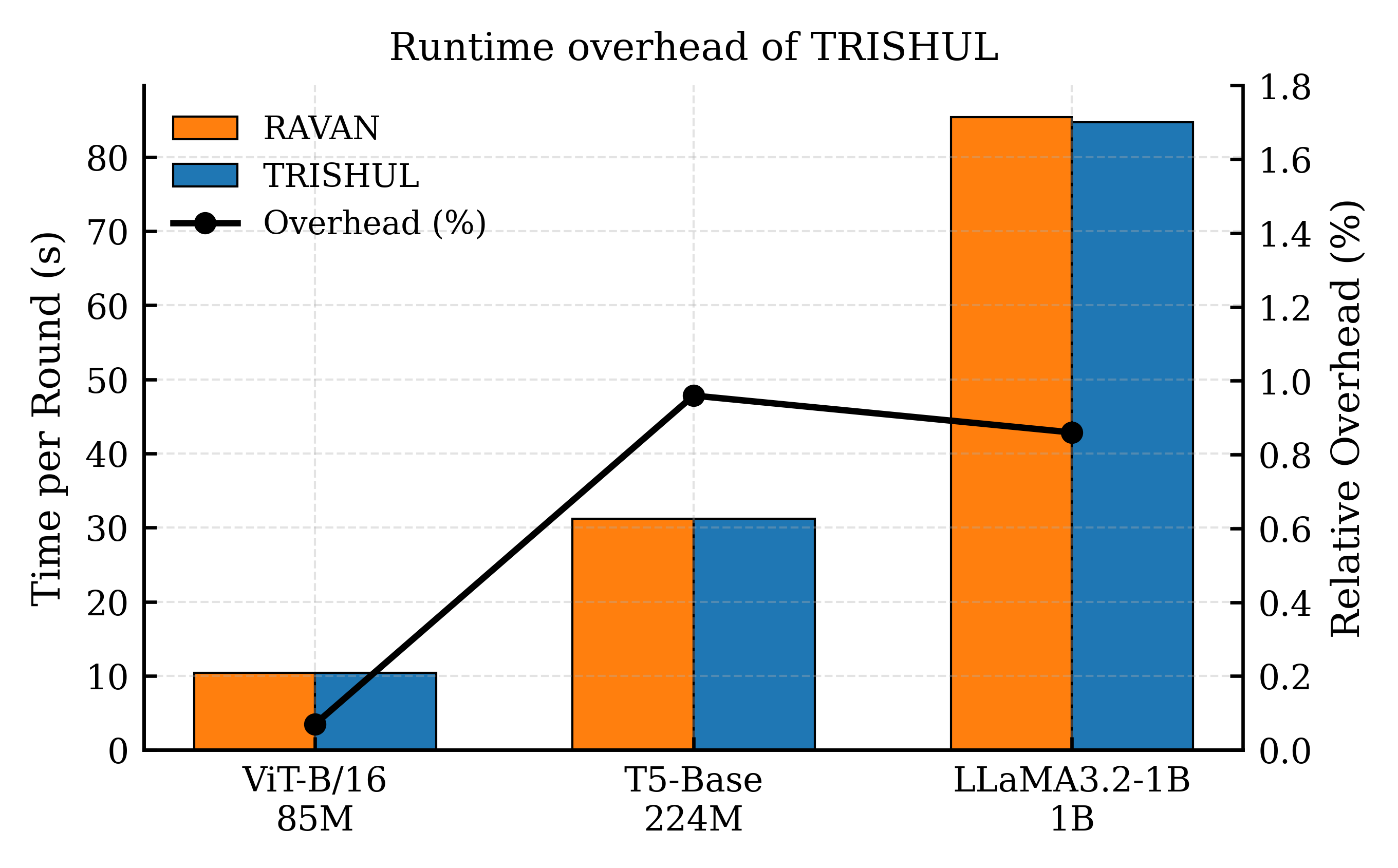}
    \caption{Training time per communication round for TRISHUL and RAVAN~\cite{raje2025ravan} across
    three model scales: ViT-B/16 with 85M parameters, T5-Base with 224M
    parameters, and LLaMA3.2-1B with 1B parameters. The secondary axis shows
    the relative overhead of TRISHUL, which remains approximately around
    $1\%$ or lower across all settings ($1.09\%$, $0.96\%$, and $0.83\%$,
    respectively). %The overhead is negligible because the proximal SVD is applied only to the small $r \times r$ core matrices, rather than to the full weight tensors.
    }
    \label{fig:overhead}
\end{figure}

\begin{table}
\caption{Performance comparison on CIFAR-100 and SVHN; Budget denotes the parameter-budget-equivalent rank, not necessarily the literal matrix rank for every baseline.}
\label{tab:vision}
\centering
\resizebox{\columnwidth}{!}{%
\begin{tabular}{l c cc cc cc cc}
\toprule
\multirow{3}{*}{Method} & \multirow{3}{*}{Budget} 
& \multicolumn{4}{c}{CIFAR-100 (Acc.\%)} 
& \multicolumn{4}{c}{SVHN (Acc.\%)} \\
\cmidrule(lr){3-6} \cmidrule(lr){7-10}
& & \multicolumn{2}{c}{20 clients} & \multicolumn{2}{c}{50 clients} 
  & \multicolumn{2}{c}{20 clients} & \multicolumn{2}{c}{50 clients} \\
\cmidrule(lr){3-4} \cmidrule(lr){5-6} \cmidrule(lr){7-8} \cmidrule(lr){9-10}
& & IID & Non-IID & IID & Non-IID & IID & Non-IID & IID & Non-IID \\
\midrule
Full-FT        & N/A & 89.89 & 86.86 & 89.78 & 85.17 & 95.06 & 90.29 & 94.90 & 89.49 \\
FedIT          & 32  & 83.49 & 68.66 & 81.75 & 68.15 & 88.66 & 84.00 & 91.67 & 77.53 \\
FedEx-LoRA     & 32  & 80.56 & 67.45 & 77.82 & 66.58 & 91.94 & 84.30 & 91.51 & 81.63 \\
FFA-LoRA       & 64  & 78.82 & 56.34 & 78.17 & 59.98 & 91.53 & 86.03 & 91.82 & 83.30 \\
Fed-SB         & 221 & 79.27 & 71.48 & 79.06 & 69.51 & 90.94 & 82.25 & 92.74 & 85.30 \\
SCAFFOLD+LoRA  & 32  & 82.17 & 73.45 & 81.93 & 72.18 & 92.31 & 87.64 & 92.85 & 86.47 \\
RAVAN          & 110 & 84.42 & 76.22 & 84.02 & 73.80 & 94.13 & 90.02 & 93.75 & 89.17 \\
\textit{TRISHUL} & 110 & \textbf{85.13} & \textbf{79.41} & \textbf{84.68} & \textbf{76.94} 
                        & \textbf{94.72} & \textbf{91.38} & \textbf{94.21} & \textbf{90.33} \\
\midrule
FedIT          & 64  & 83.82 & 71.01 & 84.04 & 73.23 & 91.39 & 84.68 & 92.06 & 79.31 \\
FedEx-LoRA     & 64  & 79.38 & 50.47 & 79.42 & 57.86 & 91.16 & 74.04 & 92.01 & 74.84 \\
FFA-LoRA       & 128 & 81.39 & 70.31 & 82.13 & 66.81 & 91.95 & 88.06 & 92.07 & 84.24 \\
Fed-SB         & 313 & 83.03 & 73.12 & 83.90 & 71.13 & 92.29 & 86.89 & 92.78 & 82.46 \\
SCAFFOLD+LoRA  & 64  & 83.54 & 74.82 & 83.21 & 73.96 & 92.88 & 88.41 & 93.14 & 87.23 \\
RAVAN          & 156 & 85.04 & 77.20 & 85.55 & 77.81 & 93.92 & 89.41 & 94.28 & 84.34 \\
\textit{TRISHUL} & 156 & \textbf{85.82} & \textbf{80.67} & \textbf{86.31} & \textbf{81.24} 
                        & \textbf{94.58} & \textbf{91.02} & \textbf{94.83} & \textbf{87.11} \\
\bottomrule
\end{tabular}}
\end{table}

\begin{table}
\caption{Performance comparison on 20~Newsgroups and MRQA. ``Budget'' denotes the parameter-budget-equivalent rank, not necessarily the literal matrix rank for every baseline.}
\label{tab:language}
\centering
\resizebox{\columnwidth}{!}{%
\begin{tabular}{l c cc cc cc cc}
\toprule
\multirow{3}{*}{Method} & \multirow{3}{*}{Budget} 
& \multicolumn{4}{c}{20 Newsgroups (Acc.\%)} 
& \multicolumn{4}{c}{MRQA (F1\%)} \\
\cmidrule(lr){3-6} \cmidrule(lr){7-10}
& & \multicolumn{2}{c}{20 clients} & \multicolumn{2}{c}{50 clients} 
  & \multicolumn{2}{c}{20 clients} & \multicolumn{2}{c}{50 clients} \\
\cmidrule(lr){3-4} \cmidrule(lr){5-6} \cmidrule(lr){7-8} \cmidrule(lr){9-10}
& & IID & Non-IID & IID & Non-IID & IID & Non-IID & IID & Non-IID \\
\midrule
Full-FT        & N/A & 71.34 & 69.29 & 71.71 & 70.13 & 62.19 & 62.25 & 62.41 & 62.51 \\
FedIT          & 32  & 69.07 & 61.98 & 67.99 & 60.67 & 61.00 & 60.57 & 61.24 & 60.52 \\
FedEx-LoRA     & 32  & 69.04 & 62.52 & 68.19 & 63.33 & 60.99 & 60.68 & 61.40 & 60.56 \\
FFA-LoRA       & 64  & 68.11 & 62.36 & 68.00 & 64.86 & 60.31 & 60.40 & 61.21 & 60.14 \\
Fed-SB         & 221 & 67.15 & 63.10 & 66.69 & 63.98 & 59.93 & 59.73 & 59.96 & 60.01 \\
SCAFFOLD+LoRA  & 32  & 68.74 & 64.31 & 67.85 & 63.92 & 61.04 & 60.71 & 61.18 & 60.83 \\
RAVAN          & 110 & 68.96 & 65.73 & 68.18 & 65.67 & 61.18 & 60.45 & 61.33 & 61.53 \\
\textit{TRISHUL} & 110 & \textbf{69.82} & \textbf{68.14} & \textbf{68.93} & \textbf{68.02} 
                        & \textbf{61.72} & \textbf{61.23} & \textbf{61.86} & \textbf{62.27} \\
\midrule
FedIT          & 64  & 69.36 & 64.41 & 68.12 & 62.67 & 61.25 & 60.75 & 61.39 & 60.26 \\
FedEx-LoRA     & 64  & 68.59 & 65.11 & 67.75 & 64.31 & 61.23 & 60.36 & 61.43 & 60.06 \\
FFA-LoRA       & 128 & 69.33 & 66.22 & 68.42 & 64.86 & 61.50 & 60.50 & 61.66 & 60.12 \\
Fed-SB         & 313 & 68.07 & 64.18 & 67.58 & 65.59 & 60.22 & 60.11 & 60.28 & 60.60 \\
SCAFFOLD+LoRA  & 64  & 69.11 & 65.84 & 68.43 & 65.17 & 61.38 & 60.94 & 61.52 & 61.04 \\
RAVAN          & 156 & 69.29 & 66.45 & 68.89 & 66.85 & 61.82 & 61.33 & 61.73 & 61.26 \\
\textit{TRISHUL} & 156 & \textbf{70.18} & \textbf{68.94} & \textbf{69.77} & \textbf{69.31} 
                        & \textbf{62.35} & \textbf{62.01} & \textbf{62.28} & \textbf{62.09} \\
\bottomrule
\end{tabular}}
\end{table}

\begin{table}
\caption{GLUE benchmark results with LLaMA3.2-1B (20 clients, lower budget).}
\label{tab:glue}
\centering
\resizebox{\columnwidth}{!}{%
\begin{tabular}{l ccccccc}
\toprule
Method & MNLI-MM & MNLI-M & QNLI & QQP & SST-2 & RTE & Avg. \\
\midrule
FedIT          & 84.24 & 84.62 & 82.74 & 85.96 & 94.61 & 65.70 & 82.97 \\
FedEx-LoRA     & 84.15 & 84.70 & 82.74 & 86.07 & 94.61 & 65.34 & 82.94 \\
FFA-LoRA       & 85.05 & 85.78 & 82.07 & 84.40 & 94.38 & 62.46 & 82.36 \\
Fed-SB         & 84.88 & 85.23 & 82.84 & 84.23 & 94.95 & 67.15 & 83.21 \\
SCAFFOLD+LoRA  & 84.63 & 85.01 & 83.17 & 85.44 & 94.82 & 66.58 & 83.28 \\
RAVAN          & 85.24 & 85.65 & 84.00 & 86.11 & 95.18 & 67.15 & 83.90 \\
\textit{TRISHUL} & \textbf{85.74} & \textbf{86.15} & \textbf{84.75} & \textbf{86.71} 
                 & \textbf{95.58} & \textbf{68.05} & \textbf{84.66} \\
\bottomrule
\end{tabular}}
\end{table}

\subsection{Ablation Studies and Analysis}
\label{subsec:ablation}

We now examine the individual contributions of the three prongs of TRISHUL. Specifically, we isolate the effects of nuclear-norm spectral regularization, concave water-filling head allocation, trainable head scaling, basis initialization, and robustness under computational heterogeneity. Unless otherwise specified, all ablations are conducted with the lower parameter budget ($r=32$), 20 clients, and results averaged over three random seeds.

For spectral diagnostics, we use the following definitions. Given singular values $\{\sigma_j\}_{j=1}^{r}$ of a client core matrix, let $p_j=\sigma_j/\sum_k\sigma_k$. Spectral entropy is
\begin{equation}
    \mathcal{H}(\mathbf{H})
    =
    -\sum_j p_j\log p_j,
    \label{eq:spectral_entropy}
\end{equation}
and the effective rank is $r_{\mathrm{eff}}(\mathbf{H})=\exp(\mathcal{H}(\mathbf{H}))$. For two top-$k$ left singular subspaces $\mathbf{U}$ and $\mathbf{V}$, principal-angle similarity is
\begin{equation}
    S_{\mathrm{angle}}(\mathbf{U},\mathbf{V})
    =
    \frac{1}{k}\sum_{j=1}^{k}\cos^2\theta_j(\mathbf{U},\mathbf{V}),
    \label{eq:principal_similarity}
\end{equation}
where $\theta_j$ are the principal angles. Dominant-direction similarity is $S_{\mathrm{dom}}=|\mathbf{u}_1^\top\mathbf{v}_1|$. We also report normalized inter-client aggregation variance by dividing the empirical Frobenius variance of uploaded core products by the corresponding RAVAN value at the first communication round.

\subsubsection{Effect of Nuclear-Norm Regularization}
\label{subsubsec:nuclear_norm}

The nuclear-norm penalty is designed to suppress unstable high-rank local update directions before server aggregation. Table~\ref{tab:ablation_nuclear} compares TRISHUL with and without this penalty. Across all tasks, the regularized version performs better, with the most substantial gains under non-IID data. On CIFAR-100, the improvement under non-IID reaches $3.07\%$, which is consistent with the theoretical variance bound in~\eqref{eq:variance_bound}. These results support the hypothesis that local spectral shrinkage reduces destructive aggregation interference and improves transferability of client updates.

\begin{table}
\caption{Ablation: effect of nuclear-norm penalty (lower budget, 20 clients).}
\label{tab:ablation_nuclear}
\centering
\resizebox{\columnwidth}{!}{%
\begin{tabular}{l cc cc cc}
\toprule
\multirow{2}{*}{Setting} 
& \multicolumn{2}{c}{CIFAR-100} 
& \multicolumn{2}{c}{SVHN} 
& \multicolumn{2}{c}{20 Newsgroups} \\
\cmidrule(lr){2-3} \cmidrule(lr){4-5} \cmidrule(lr){6-7}
& IID & Non-IID & IID & Non-IID & IID & Non-IID \\
\midrule
w/o penalty (\(\lambda=0\))   & 84.12 & 76.34 & 94.02 & 90.21 & 68.87 & 65.96 \\
w/ penalty (\(\lambda=0.01\)) & \textbf{85.13} & \textbf{79.41} & \textbf{94.72} & \textbf{91.38} & \textbf{69.82} & \textbf{68.14} \\
\bottomrule
\end{tabular}}
\end{table}

\paragraph{Effect of singular-value thresholding}
To better understand why nuclear-norm shrinkage improves federated aggregation, we inspect the singular-value spectra before and after the proximal SVT step and compare the spectra of RAVAN and TRISHUL in Figure~\ref{fig:svt_spectra}. The left panel shows that, under non-IID data, the pre-SVT spectrum contains a heavy tail of weak singular values corresponding to client-specific adaptation directions that do not transfer across clients; averaging these directions inflates aggregation noise. After the SVT step with $\lambda=0.01$, singular values below the threshold $\tau=\lambda\eta$ are zeroed out entirely, producing a sharp spectrum concentrated on the dominant modes. The right panel compares RAVAN and TRISHUL directly: RAVAN retains substantially more tail energy, whereas TRISHUL's spectrum drops off steeply after the top few modes, indicating stronger spectral control. This compression is what enables TRISHUL to achieve lower aggregation variance and higher cross-client subspace alignment in Figure~\ref{fig:spectral_diagnostics} and Figure~\ref{fig:spectral_alignment}, respectively.

\begin{figure}[!t]
\centering
\includegraphics[width=\linewidth]{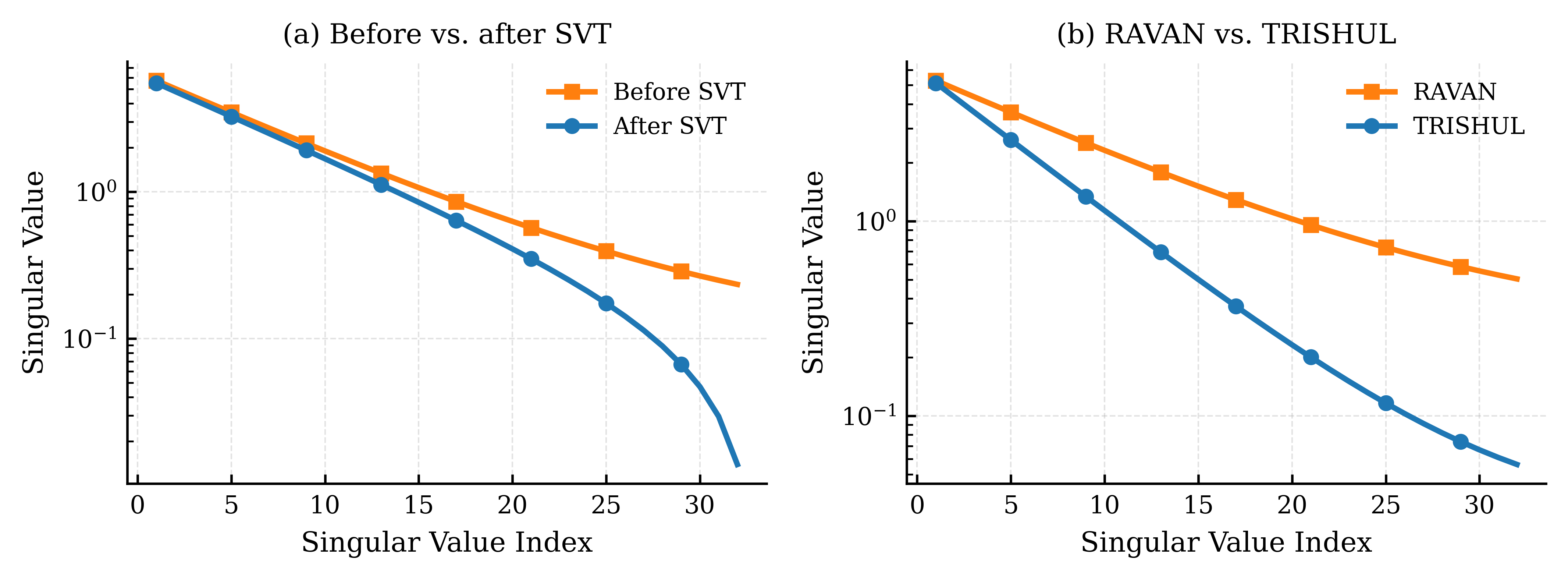}

\vspace{-0.4em}
\makebox[0.48\linewidth]
\hfill
\makebox[0.48\linewidth]

\caption{Singular-value spectra of core matrices ($\lambda=0.01$, $r=32$, CIFAR-100 non-IID). 
The left panel compares the spectrum before and after the proximal SVT step, showing that weak tail components corresponding to client-specific high-rank noise are zeroed out while dominant singular values are preserved. 
The right panel compares RAVAN and TRISHUL, showing that RAVAN retains a heavier tail beyond the top few modes, whereas TRISHUL produces a sharper spectrum with reduced tail energy.}
\label{fig:svt_spectra}
\end{figure}

\paragraph{Direct spectral inconsistency analysis}
To directly verify that TRISHUL reduces spectral inconsistency across heterogeneous clients, we track five complementary diagnostics computed from the client-side core updates before aggregation on CIFAR-100 under non-IID partitioning ($\alpha=0.3$, 20 clients). Figure~\ref{fig:spectral_alignment} shows that TRISHUL achieves consistently higher principal-angle similarity between client update subspaces across all 50 rounds, indicating that spectral shrinkage aligns the adaptation subspaces learned by different clients. Figure~\ref{fig:spectral_alignment} provides complementary evidence at the level of individual directions: the cosine similarity between dominant singular vectors is higher for TRISHUL throughout training, meaning the surviving spectral modes are more globally shared and less client-specific. Figure~\ref{fig:spectral_diagnostics} shows that TRISHUL's updates have lower spectral entropy, reflecting energy concentrated in a few stable modes rather than spread across many noisy directions. Figure~\ref{fig:spectral_diagnostics} shows that the effective rank of uploaded core matrices is consistently lower under TRISHUL than RAVAN, indicating that nuclear-norm shrinkage suppresses weak tail singular directions before server aggregation. Finally, Figure~\ref{fig:spectral_diagnostics} shows that the normalized inter-client aggregation variance is substantially lower under TRISHUL, with the gap widening over training rounds. The version of TRISHUL without shrinkage ($\lambda=0$) falls between RAVAN and TRISHUL on all five metrics, isolating the contribution of the proximal SVT step.

\begin{figure}[!t]
\centering
\includegraphics[width=\linewidth]{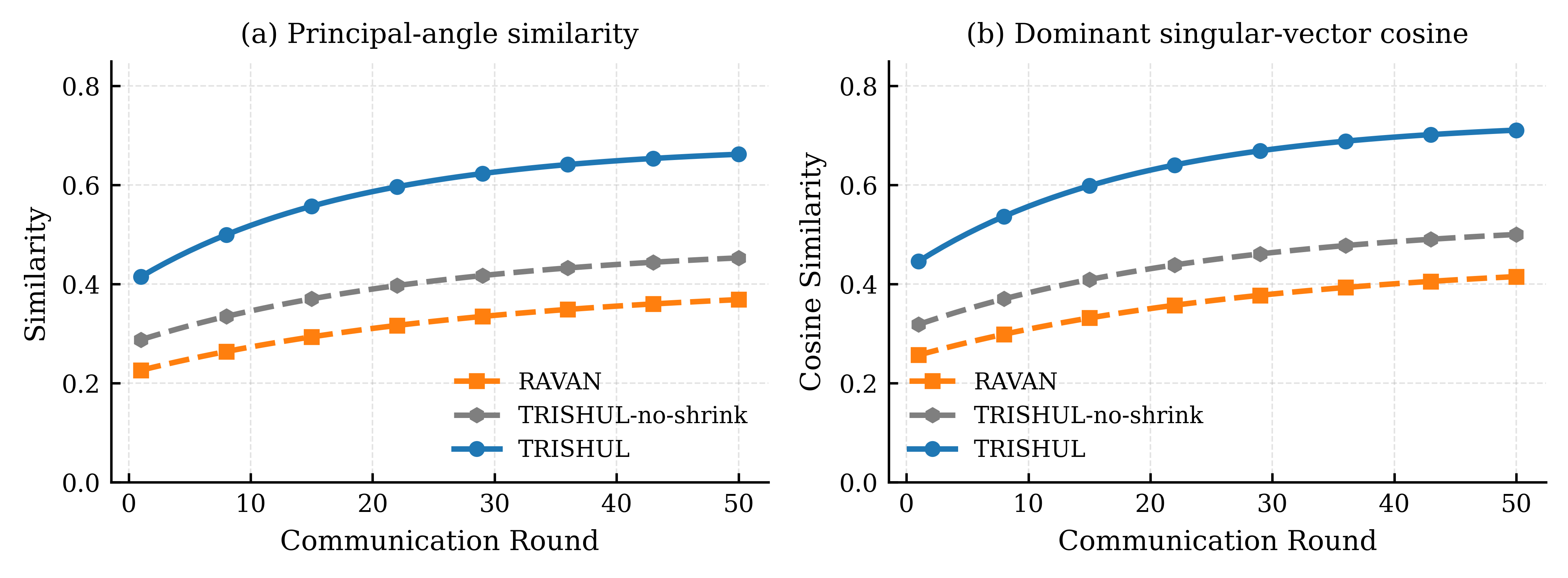}

\caption{Spectral alignment of client update subspaces across communication rounds (CIFAR-100, non-IID, $\alpha=0.3$, 20 clients). The left panel shows principal-angle similarity between client update subspaces:
TRISHUL achieves consistently higher subspace alignment than both RAVAN and TRISHUL without shrinkage throughout training. The right panel shows dominant singular-vector cosine similarity across clients: TRISHUL yields more aligned dominant spectral directions at every communication round.}
\label{fig:spectral_alignment}
\end{figure}

\begin{figure}[!t]
\centering
\includegraphics[width=\linewidth]{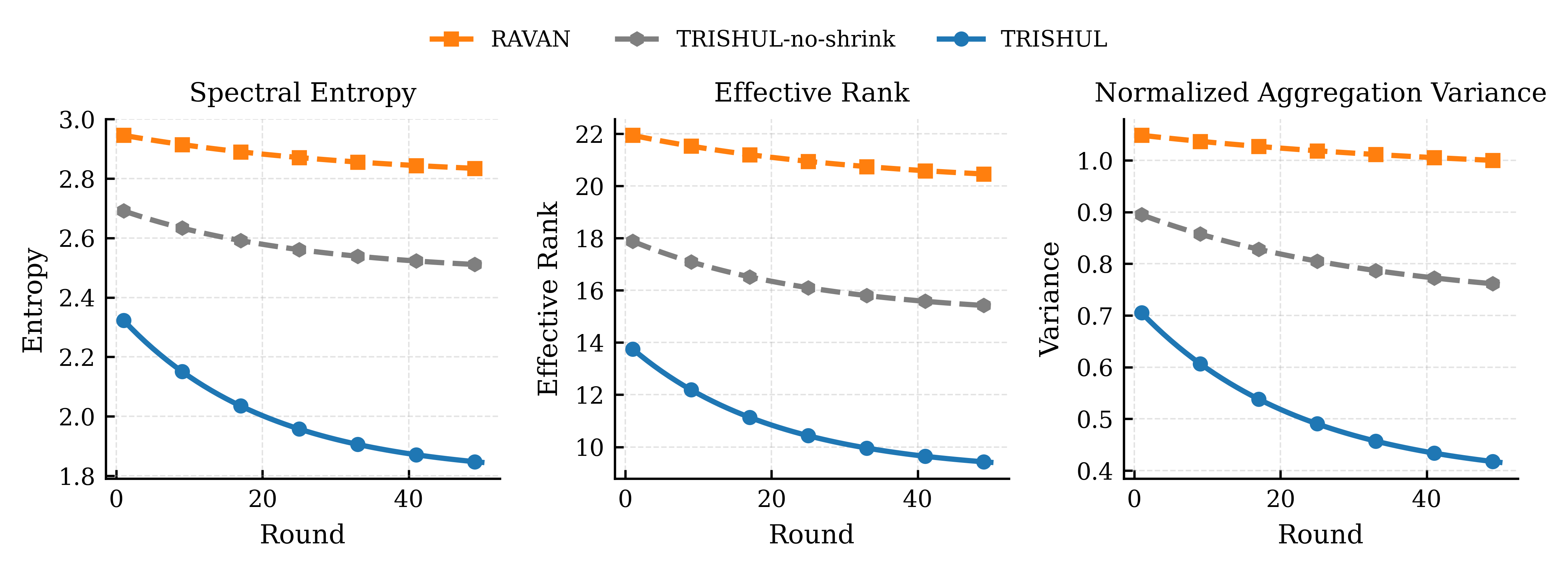}

\caption{Spectral diagnostics of client core updates across communication
rounds (CIFAR-100, non-IID, $\alpha=0.3$, 20 clients). 
The left panel shows spectral entropy: lower entropy under TRISHUL indicates
that update energy becomes increasingly concentrated in a small number of
stable dominant modes as training proceeds. 
The middle panel shows effective rank: TRISHUL maintains a substantially
lower effective rank than RAVAN and TRISHUL without shrinkage, indicating that nuclear-norm proximal thresholding removes weak tail singular directions before server aggregation. 
The right panel shows normalized inter-client aggregation variance: TRISHUL
achieves the lowest aggregation variance of the three compared settings,
with the gap over RAVAN widening as training progresses. 
Together, these diagnostics show that TRISHUL produces spectrally compact,
low-rank updates with reduced inter-client variance, consistent with the
rank-controlled variance mechanism in Eq.~\eqref{eq:variance_bound}.}
\label{fig:spectral_diagnostics}
\end{figure}

\subsubsection{Sensitivity to the Regularization Parameter $\lambda$}
\label{subsubsec:lambda_sensitivity}

Table~\ref{tab:lambda_sensitivity} and Figure~\ref{fig:lambda} reveal a
clear optimum at $\lambda=0.01$ across all three datasets. At $\lambda=0$ (no penalty), non-IID accuracy on CIFAR-100 is $76.34\%$; it rises to $79.41\%$ at $\lambda=0.01$ before falling back to $74.27\%$ at $\lambda=1.0$. The pattern is consistent on SVHN ($90.21\% \to 91.38\% \to 88.63\%$) and 20~Newsgroups ($65.96\% \to 68.14\% \to 63.71\%$). The method remains relatively stable across $\lambda\in[0.001,0.1]$, suggesting modest tuning burden in practice.

\begin{figure}
    \centering
    \includegraphics[width=\columnwidth]{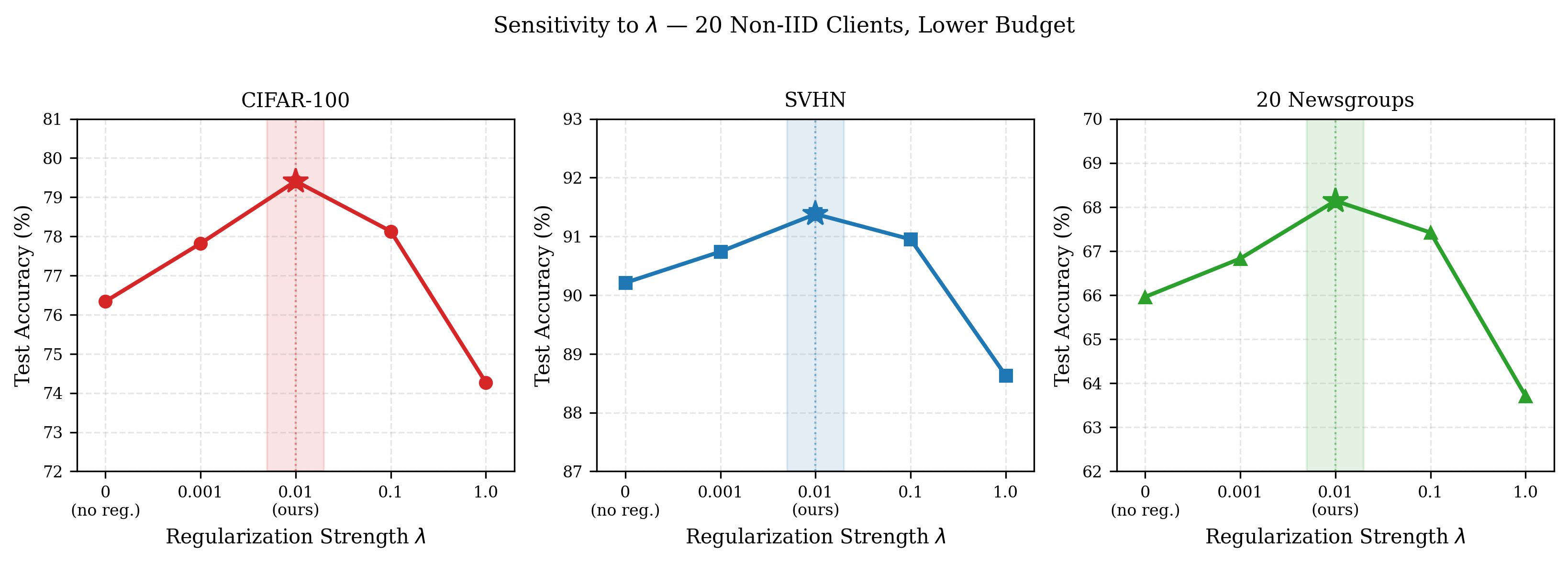}
    \caption{Test accuracy versus regularization strength $\lambda \in \{0,0.001,0.01,0.1,1.0\}$ on CIFAR-100, SVHN, and 20~Newsgroups under non-IID data with 20 clients (lower budget, $r=32$). A clear optimum occurs at $\lambda=0.01$ on all three datasets. Performance is relatively stable across $\lambda\in[0.001,0.1]$, indicating modest sensitivity to the exact choice of regularization strength.}
\label{fig:lambda}
\end{figure}

\begin{table}
\caption{Sensitivity to \(\lambda\) (lower budget, 20 non-IID clients).}
\label{tab:lambda_sensitivity}
\centering
\resizebox{\columnwidth}{!}{%
\begin{tabular}{l ccc}
\toprule
\(\lambda\) & CIFAR-100 (Acc.\%) & SVHN (Acc.\%) & 20 Newsgroups (Acc.\%) \\
\midrule
0 (no penalty)  & 76.34 & 90.21 & 65.96 \\
0.001           & 77.82 & 90.74 & 66.83 \\
0.01 (ours)     & \textbf{79.41} & \textbf{91.38} & \textbf{68.14} \\
0.1             & 78.13 & 90.95 & 67.42 \\
1.0             & 74.27 & 88.63 & 63.71 \\
\bottomrule
\end{tabular}}
\end{table}

\paragraph{Bias--variance tradeoff of spectral shrinkage}
Figure~\ref{fig:lambda_diagnostics} plots CIFAR-100 accuracy against $\lambda$ directly. Performance rises as $\lambda$ increases from $0$ to $0.01$, because shrinkage progressively removes client-specific noise directions that inflate aggregation variance. Beyond $\lambda=0.01$, accuracy declines sharply: at $\lambda=1.0$ the threshold $\tau=\lambda\eta$ is large enough to zero out task-relevant singular modes, introducing approximation bias that outweighs the variance reduction. Figure~\ref{fig:lambda_diagnostics} shows the corresponding spectral entropy: entropy decreases monotonically with $\lambda$, showing progressive compression of the update spectrum. At $\lambda=1.0$ the entropy is minimal but accuracy is worst, illustrating that over-compression removes useful adaptation capacity. Together these figures provide an empirical validation of the bias--variance tradeoff.

\begin{figure}[!t]
\centering
\includegraphics[width=\linewidth]{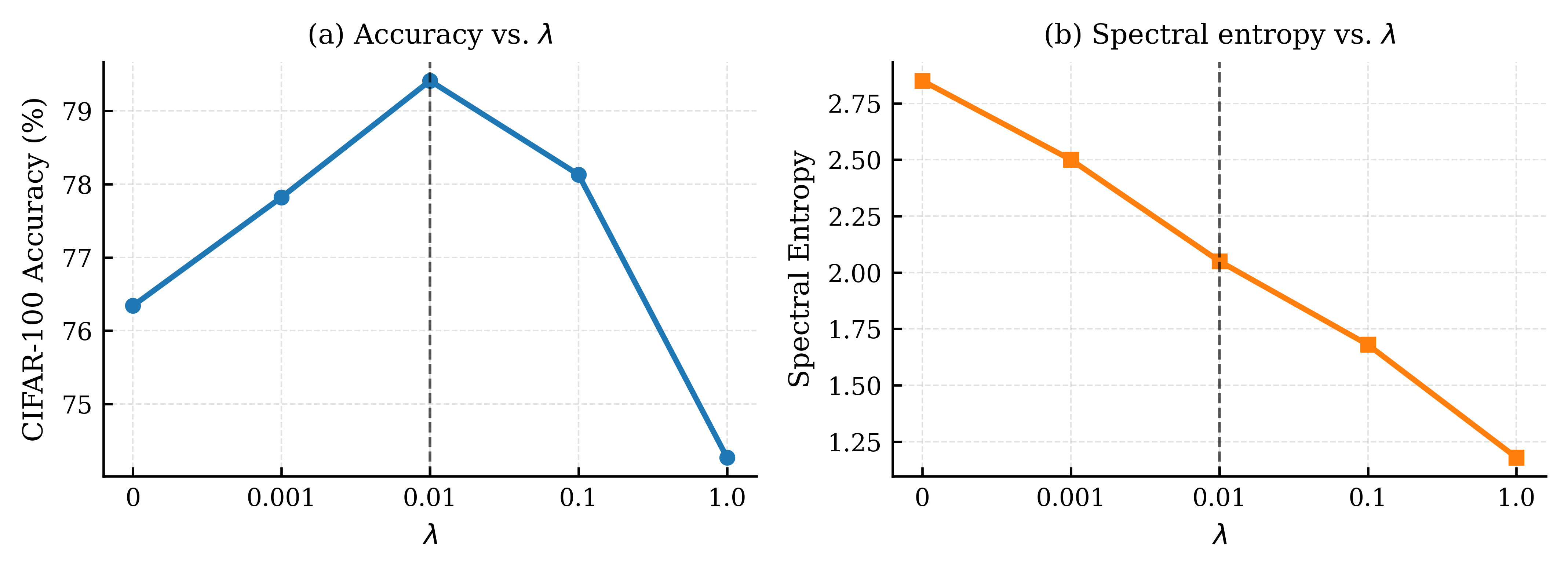}

\caption{Bias--variance tradeoff of spectral shrinkage on CIFAR-100
(non-IID, 20 clients, $r=32$). 
The left panel shows accuracy versus $\lambda$: moderate shrinkage
($\lambda=0.01$) improves accuracy by $3.07\%$ over no penalty, while
overly aggressive shrinkage ($\lambda=1.0$) reduces accuracy below the
unregularized baseline by zeroing out task-relevant singular modes and
introducing approximation bias. 
The right panel shows spectral entropy versus $\lambda$: entropy decreases
monotonically as $\lambda$ increases, reflecting progressive compression
of the update spectrum. The regime $\lambda\in[0.001,0.1]$ achieves the
best accuracy--entropy balance; beyond $\lambda=0.1$, entropy is suppressed
too aggressively, removing task-relevant directions and degrading downstream
performance.}
\label{fig:lambda_diagnostics}
\end{figure}

\subsubsection{Extreme Heterogeneity}
\label{subsubsec:extreme_heterogeneity}

To stress-test robustness, we evaluate all methods on CIFAR-100 with stronger non-IID partitions generated by $\alpha \in \{0.1,0.05\}$. Table~\ref{tab:extreme_heterogeneity} and Figure~\ref{fig:heterogeneity} show that TRISHUL's advantage widens monotonically as non-IID severity increases. At $\alpha=0.3$, TRISHUL leads the strongest baseline (RAVAN) by $3.2\%$ ($79.41\%$ vs.\ $76.22\%$). At $\alpha=0.1$ the gap grows to $4.3\%$ ($73.86\%$ vs.\ $69.54\%$), and at $\alpha=0.05$ it reaches $4.7\%$ ($68.53\%$ vs.\ $63.81\%$). Notably, FFA-LoRA collapses to $41.93\%$ at $\alpha=0.05$, and even SCAFFOLD+LoRA, the strongest gradient-level method—reaches only $62.14\%$, a gap of $6.4\%$ behind TRISHUL. This monotonically increasing advantage supports the central thesis: spectral inconsistency of client updates is a failure mode that gradient-only corrections may not resolve, and explicit spectral control via nuclear-norm proximal shrinkage becomes increasingly essential as client distributions diverge.

\begin{figure}
    \centering
    \includegraphics[width=\columnwidth]{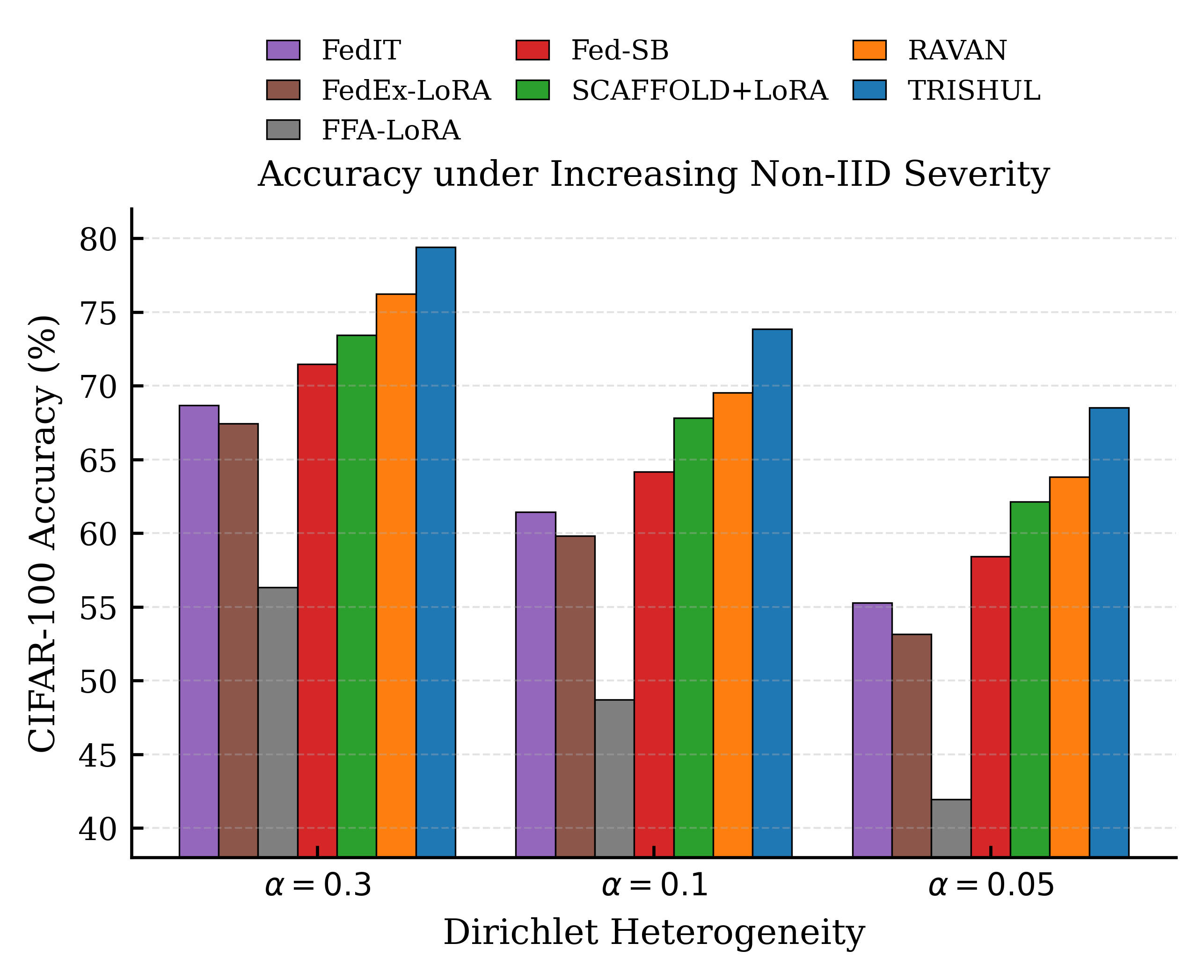}
    \caption{CIFAR-100 test accuracy under increasing non-IID severity ($\alpha \in \{0.3, 0.1, 0.05\}$) with 20 clients and the lower parameter budget ($r=32$). TRISHUL's advantage over the strongest baseline (RAVAN) grows from $3.2\%$ at $\alpha=0.3$ to $4.7\%$ at $\alpha=0.05$. SCAFFOLD+LoRA, despite explicit gradient-variance correction, trails TRISHUL by $6.4\%$ at $\alpha=0.05$, suggesting that gradient-level stabilization alone is insufficient under severe spectral misalignment of client updates.}
\label{fig:heterogeneity}
\end{figure}

\begin{table}
\caption{Accuracy under extreme heterogeneity (CIFAR-100, 20 clients, lower budget, non-IID).}
\label{tab:extreme_heterogeneity}
\centering
\resizebox{\columnwidth}{!}{%
\begin{tabular}{l ccc}
\toprule
Method & \(\alpha=0.3\) & \(\alpha=0.1\) & \(\alpha=0.05\) \\
\midrule
FedIT          & 68.66 & 61.43 & 55.27 \\
FedEx-LoRA     & 67.45 & 59.82 & 53.14 \\
FFA-LoRA       & 56.34 & 48.71 & 41.93 \\
Fed-SB         & 71.48 & 64.17 & 58.42 \\
SCAFFOLD+LoRA  & 73.45 & 67.82 & 62.14 \\
RAVAN          & 76.22 & 69.54 & 63.81 \\
\textbf{TRISHUL} & \textbf{79.41} & \textbf{73.86} & \textbf{68.53} \\
\bottomrule
\end{tabular}}
\end{table}

\paragraph{Spectral behavior under stronger heterogeneity}
Figure~\ref{fig:acc_vs_inconsistency} plots CIFAR-100 accuracy against a spectral inconsistency index for RAVAN and TRISHUL across $\alpha\in \{0.3,0.1,0.05\}$. Both methods move toward the upper-left as $\alpha$ increases (higher inconsistency, lower accuracy), but TRISHUL consistently occupies the upper-left relative to RAVAN at each heterogeneity level, higher accuracy at strictly lower spectral inconsistency, supporting the interpretation that improved spectral alignment contributes to TRISHUL's accuracy gains. Figure~\ref{fig:alpha_diagnostics} shows that principal-angle similarity falls for all methods as $\alpha$ decreases, but TRISHUL maintains substantially higher alignment at every level: at $\alpha=0.05$ TRISHUL's alignment is approximately $0.35$ versus $0.15$ for RAVAN and near zero for TRISHUL without shrinkage. Figure~\ref{fig:alpha_diagnostics} shows that TRISHUL maintains lower spectral entropy than the other two settings at all three heterogeneity levels, with the entropy gap largest at $\alpha=0.05$, precisely where accuracy gains are greatest.

\begin{figure}[!t]
\centering
\includegraphics[width=\linewidth]{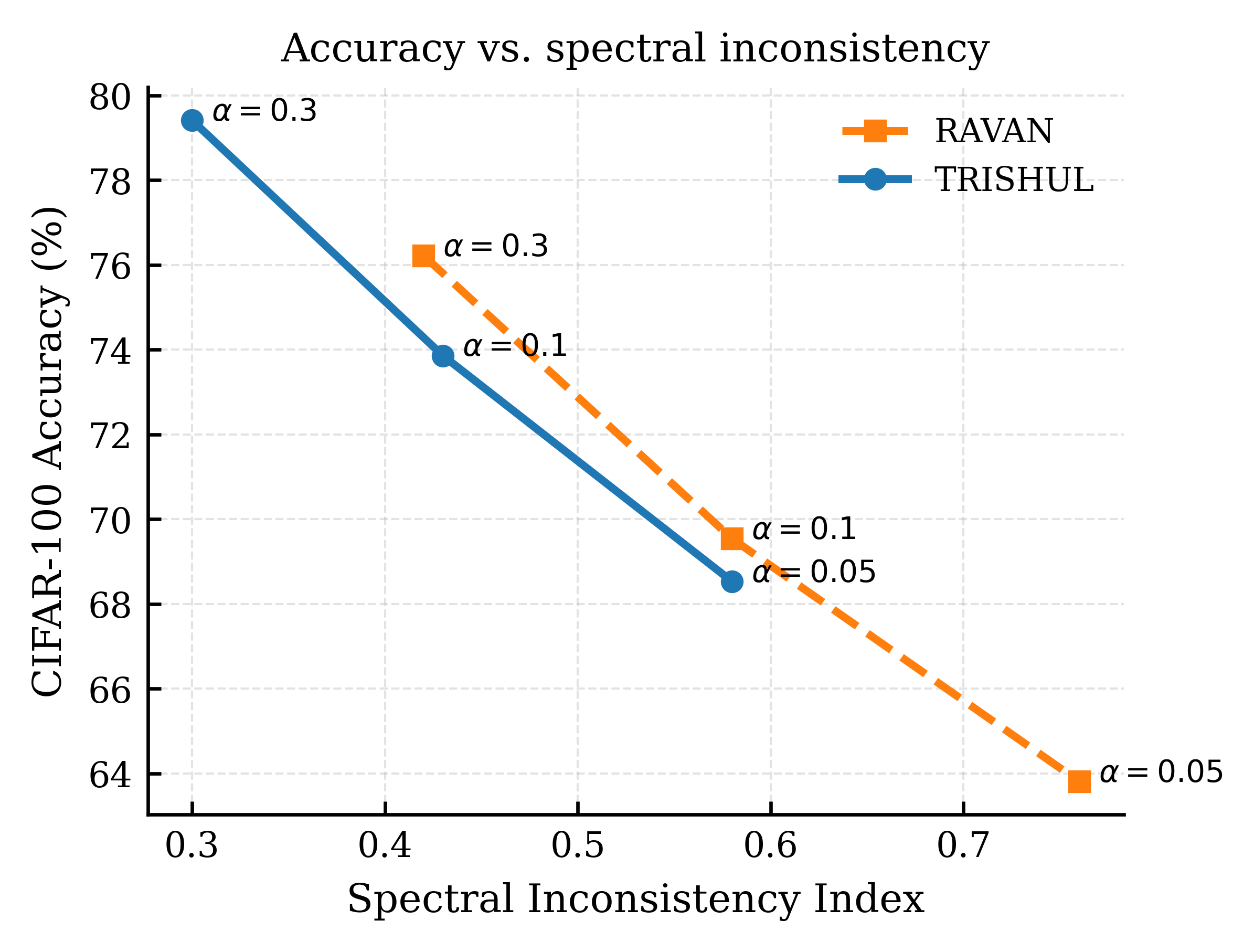}
\caption{Test accuracy versus spectral inconsistency index for
RAVAN and TRISHUL across $\alpha\in\{0.3,0.1,0.05\}$ on CIFAR-100 (20 clients, lower budget). At every heterogeneity level, TRISHUL achieves higher accuracy at lower spectral inconsistency, providing direct empirical evidence that improved spectral alignment across clients is a proximate cause of TRISHUL's performance gains.}
\label{fig:acc_vs_inconsistency}
\end{figure}

\begin{figure}[!t]
\centering
\includegraphics[width=\linewidth]{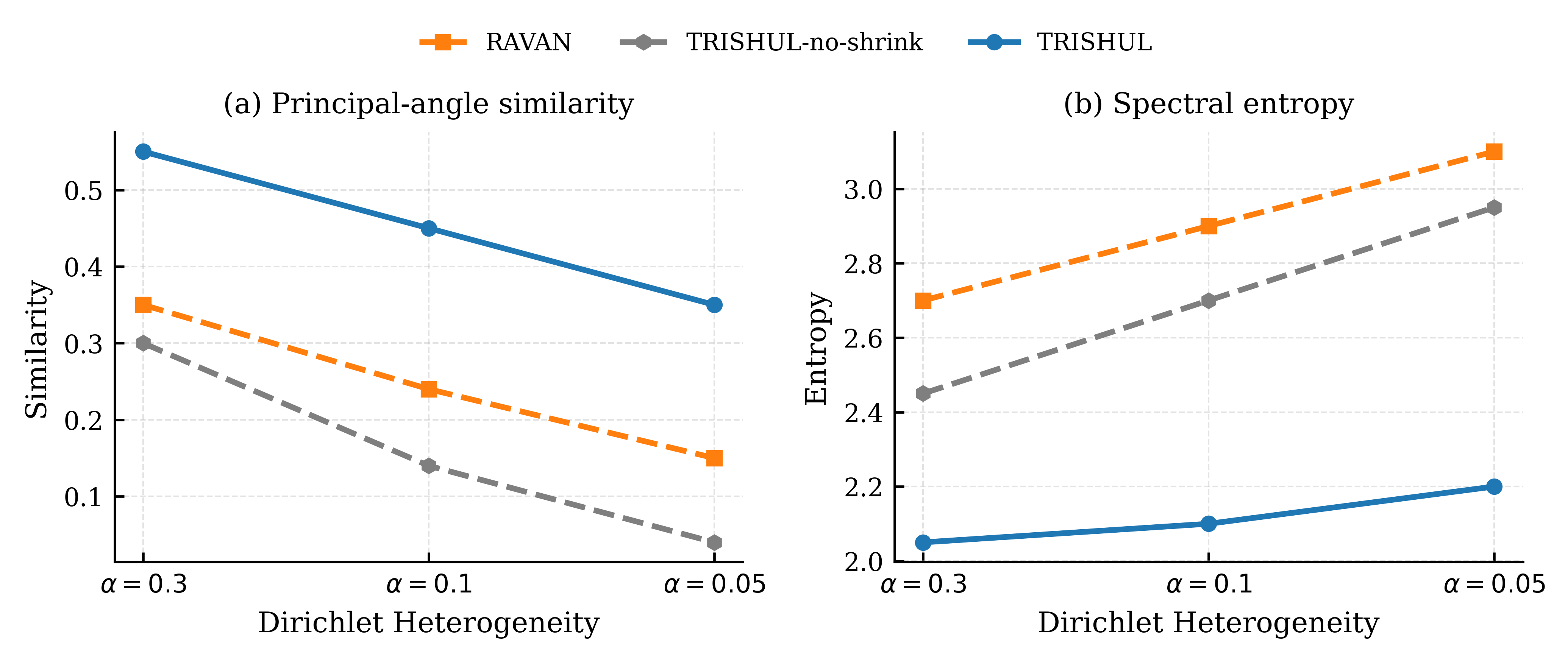}

\caption{Effect of increasing non-IID severity on spectral alignment and
entropy (CIFAR-100, 20 clients). The left panel shows principal-angle
similarity between client update subspaces under different Dirichlet
heterogeneity levels ($\alpha\in\{0.3,0.1,0.05\}$). TRISHUL preserves
substantially higher subspace compatibility; at $\alpha=0.05$, TRISHUL's
alignment is approximately $0.35$ versus $\approx 0.15$ for RAVAN and
near zero for TRISHUL without shrinkage. The right panel shows spectral
entropy of client updates under increasing non-IID severity. The entropy
gap widens as $\alpha$ decreases, indicating that nuclear-norm shrinkage remains effective and increasingly important under severe client heterogeneity. Together, both metrics show that TRISHUL maintains stronger
spectral control as client distributions diverge.}
\label{fig:alpha_diagnostics}
\end{figure}

\begin{figure}
    \centering
    \includegraphics[width=\columnwidth]{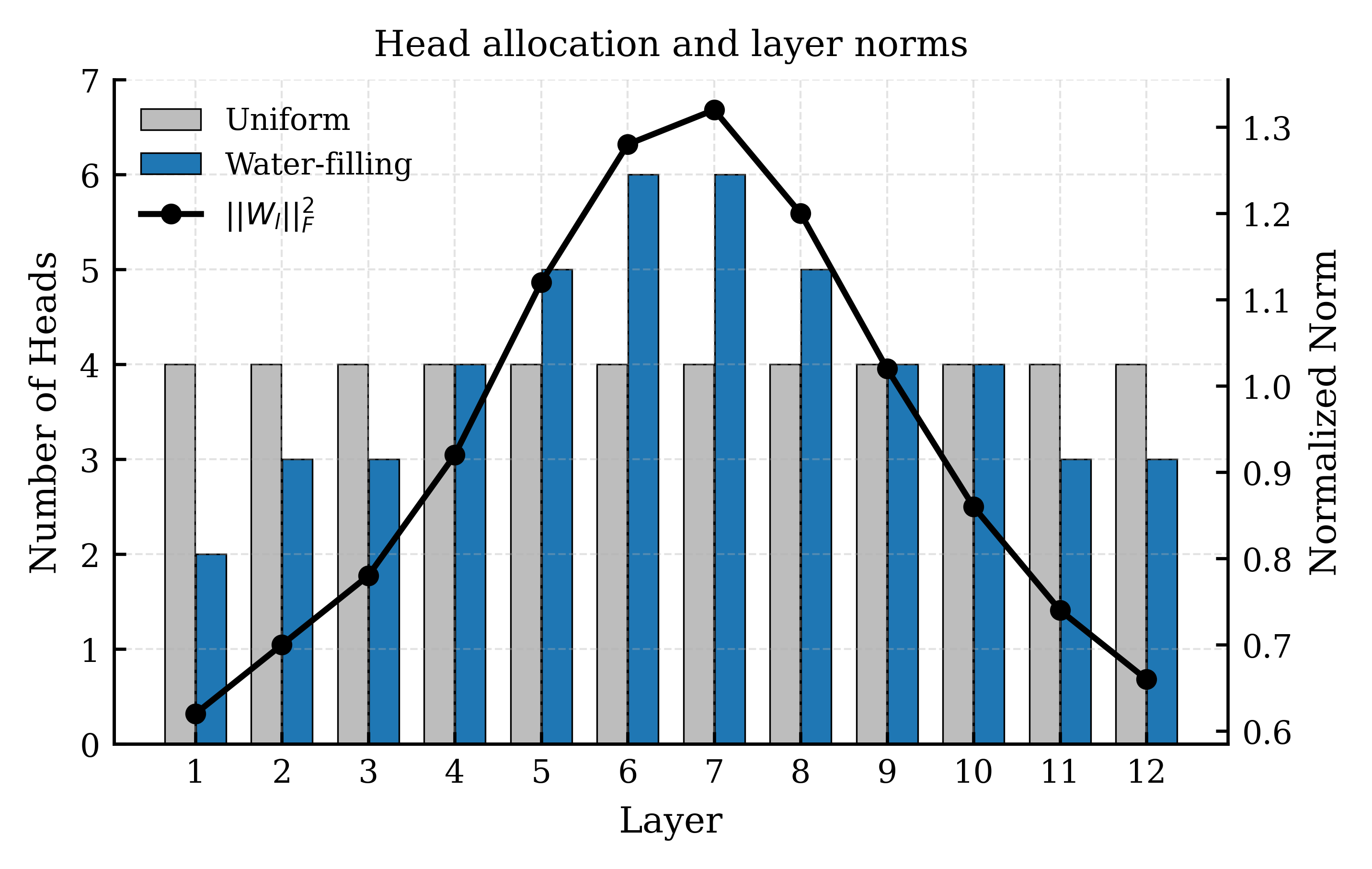}
    \caption{Head allocation per layer in ViT-B/16 (12 layers, total budget $h=48$) under uniform assignment (4 heads per layer) and concave water-filling (Eq.~\eqref{eq:waterfill_sol}). Layers with larger pretrained Frobenius norms $\|\mathbf{W}_l\|_F^2$ (shown on the secondary axis) receive more heads under TRISHUL. The allocation is computed once before training from the pretrained model and incurs no runtime overhead.}
\label{fig:waterfilling}
\end{figure}

\begin{table}
\caption{Ablation: head allocation strategies (lower budget, 20 clients).}
\label{tab:ablation_waterfilling}
\centering
\resizebox{\columnwidth}{!}{%
\begin{tabular}{l cc cc cc}
\toprule
\multirow{2}{*}{Allocation Strategy} 
& \multicolumn{2}{c}{CIFAR-100} 
& \multicolumn{2}{c}{SVHN} 
& \multicolumn{2}{c}{20 Newsgroups} \\
\cmidrule(lr){2-3} \cmidrule(lr){4-5} \cmidrule(lr){6-7}
& IID & Non-IID & IID & Non-IID & IID & Non-IID \\
\midrule
Uniform               & 84.41 & 77.82 & 94.23 & 90.88 & 68.96 & 66.91 \\
Gradient-norm-based   & 84.63 & 78.14 & 94.38 & 91.02 & 69.14 & 67.23 \\
Loss-curvature-based  & 84.57 & 78.41 & 94.41 & 91.17 & 69.27 & 67.58 \\
Concave water-filling (TRISHUL) & \textbf{85.13} & \textbf{79.41} & \textbf{94.72} & \textbf{91.38} & \textbf{69.82} & \textbf{68.14} \\
\bottomrule
\end{tabular}}
\end{table}

\paragraph{Layer-wise allocation behavior}
Figures~\ref{fig:waterfilling} and~\ref{fig:waterfilling_layerwise} visualize
the concave water-filling allocation against uniform assignment in ViT-B/16 (12 transformer layers, total budget $h=48$). Uniform allocation assigns exactly 4 heads to every layer. Concave water-filling assigns between 2 and 6 heads depending on the pretrained capacity scores: layers with larger pretrained norms (e.g. layers 5-8 in ViT-B/16, which correspond to mid-network attention blocks with stronger representational capacity) receive up to 6 heads, while early and late layers with weaker norms receive as few as 2--3. This non-uniform distribution concentrates trainable capacity where it can be most productively absorbed, directly improving the total representational gain $\sum_l h_l^*\|\mathbf{W}_l\|_F^2$ over the uniform baseline (Eq.~\eqref{eq:waterfill_dominates}). Critically, this allocation is computed once from pretrained norms before training begins and requires no runtime statistics, making it both principled and practically free for edge deployment.

\begin{figure}[!t]
\centering
\includegraphics[width=\linewidth]{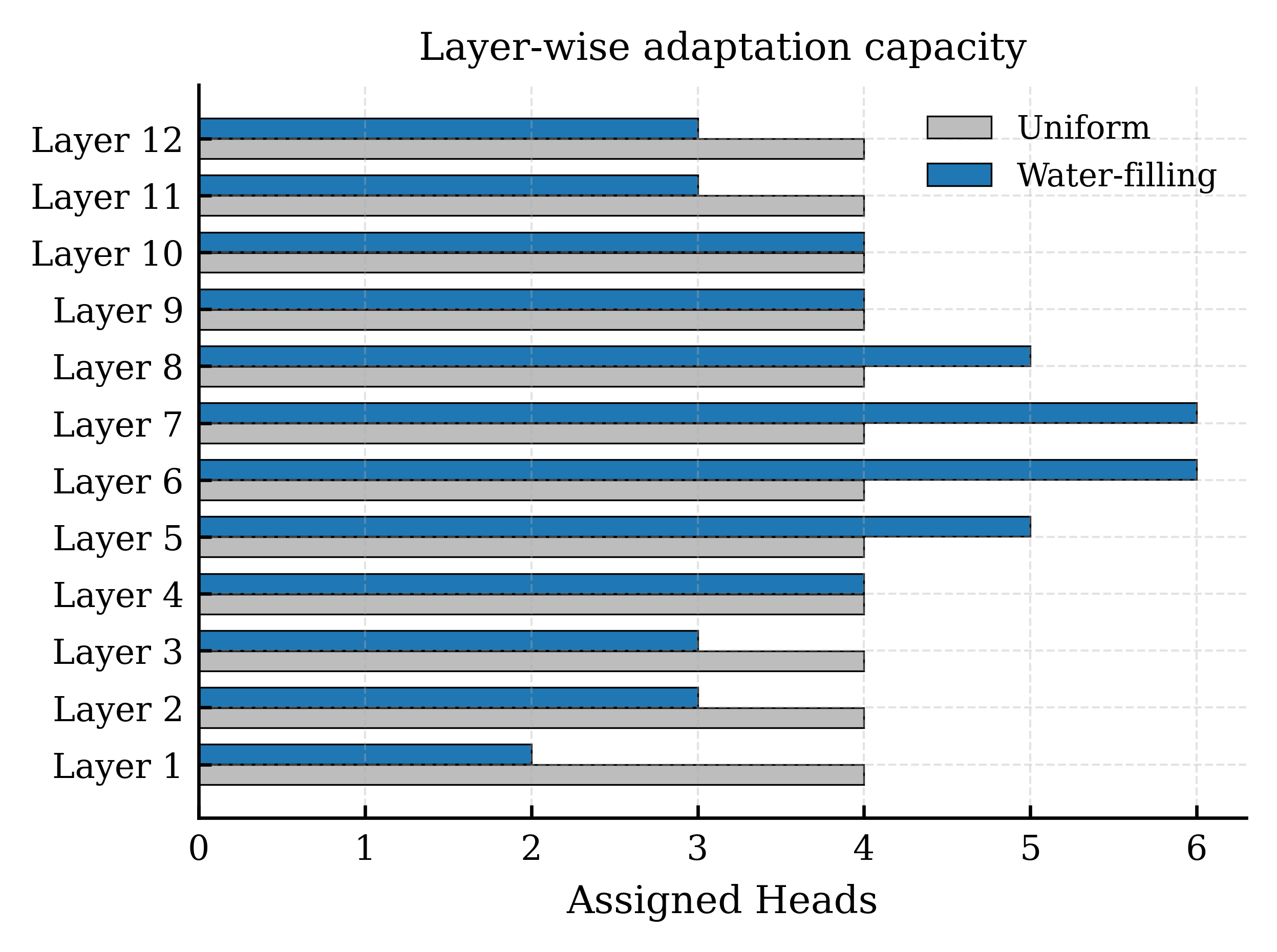}
\caption{Layer-wise head allocation in ViT-B/16 under uniform assignment and concave water-filling. Mid-network layers (roughly layers 5--8) with stronger pretrained representational capacity receive up to 6 heads, while shallower and deeper layers with weaker norms receive as few as 2--3. %Concave water-filling maximizes the diminishing-return allocation utility in Eq.~\eqref{eq:waterfill_opt} and improves that utility over uniform assignment when layer capacities differ.
}
\label{fig:waterfilling_layerwise}
\end{figure}

\begin{table}
\caption{Ablation: fixed vs. trainable scaling factors (lower budget, 20 clients).}
\label{tab:ablation_scaling}
\centering
\resizebox{\columnwidth}{!}{%
\begin{tabular}{l cc cc cc}
\toprule
\multirow{2}{*}{Scaling} 
& \multicolumn{2}{c}{CIFAR-100} 
& \multicolumn{2}{c}{SVHN} 
& \multicolumn{2}{c}{20 Newsgroups} \\
\cmidrule(lr){2-3} \cmidrule(lr){4-5} \cmidrule(lr){6-7}
& IID & Non-IID & IID & Non-IID & IID & Non-IID \\
\midrule
Fixed (\(s_i=1\)) & 84.72 & 77.62 & 94.41 & 90.85 & 69.21 & 66.98 \\
Trainable (TRISHUL) & \textbf{85.13} & \textbf{79.41} & \textbf{94.72} & \textbf{91.38} & \textbf{69.82} & \textbf{68.14} \\
\bottomrule
\end{tabular}}
\end{table}

\begin{table}
\caption{Ablation: initialization strategies (lower budget, 20 non-IID clients).}
\label{tab:ablation_init}
\centering
\resizebox{0.75\columnwidth}{!}{%
\begin{tabular}{l ccc}
\toprule
Initialization  & CIFAR-100 & SVHN & 20 Newsgroups \\
\midrule
Random Normal   & 78.21 & 90.15 & \textbf{68.14} \\
Gram-Schmidt    & \textbf{79.41} & \textbf{91.38} & 66.92 \\
Constant        & 58.30 & 88.12 & 56.85 \\
Shared Subspace & 57.51 & 84.67 & 55.73 \\ 
\bottomrule
\end{tabular}}
\end{table}
\begin{table}
\caption{Accuracy under computational heterogeneity (CIFAR-100, 20 non-IID clients, lower budget).}
\label{tab:ablation_heterogeneity}
\centering
\resizebox{0.85\columnwidth}{!}{%
\begin{tabular}{l ccc}
\toprule
Method & Bell-shaped & Uniform & Skewed-right \\
\midrule
HetLoRA                 & 75.23 & 73.41 & 68.92 \\
FlexLoRA                & 76.11 & 74.28 & 70.14 \\
TRISHUL (weight-based)  & 77.84 & 76.32 & 72.45 \\
TRISHUL (gradient-based)& 78.21 & 77.03 & 73.68 \\
TRISHUL (random)        & \textbf{79.06} & \textbf{78.14} & \textbf{75.31} \\
\bottomrule
\end{tabular}}
\end{table}

\paragraph{Client update covariance}
We also visualize the covariance structure of client updates in Figure~\ref{fig:covariance}. 
The left panel shows RAVAN, where the covariance matrix has a noisy structure with substantial off-diagonal variation, indicating that client updates are highly client-specific and poorly aligned, exactly the regime in which aggregation variance is large. 
The right panel shows TRISHUL, where the covariance matrix is substantially cleaner: the diagonal structure is more pronounced and off-diagonal entries are suppressed toward zero, indicating that updates across clients become more coherent after nuclear-norm spectral shrinkage. 
This structural difference directly corresponds to the reduced aggregation variance and lower spectral entropy observed in Figure~\ref{fig:spectral_diagnostics}, providing a holistic view of how TRISHUL improves cross-client update consistency.

\begin{figure}[!t]
\centering
\begin{minipage}[t]{0.48\linewidth}
    \centering
    \includegraphics[width=\linewidth]{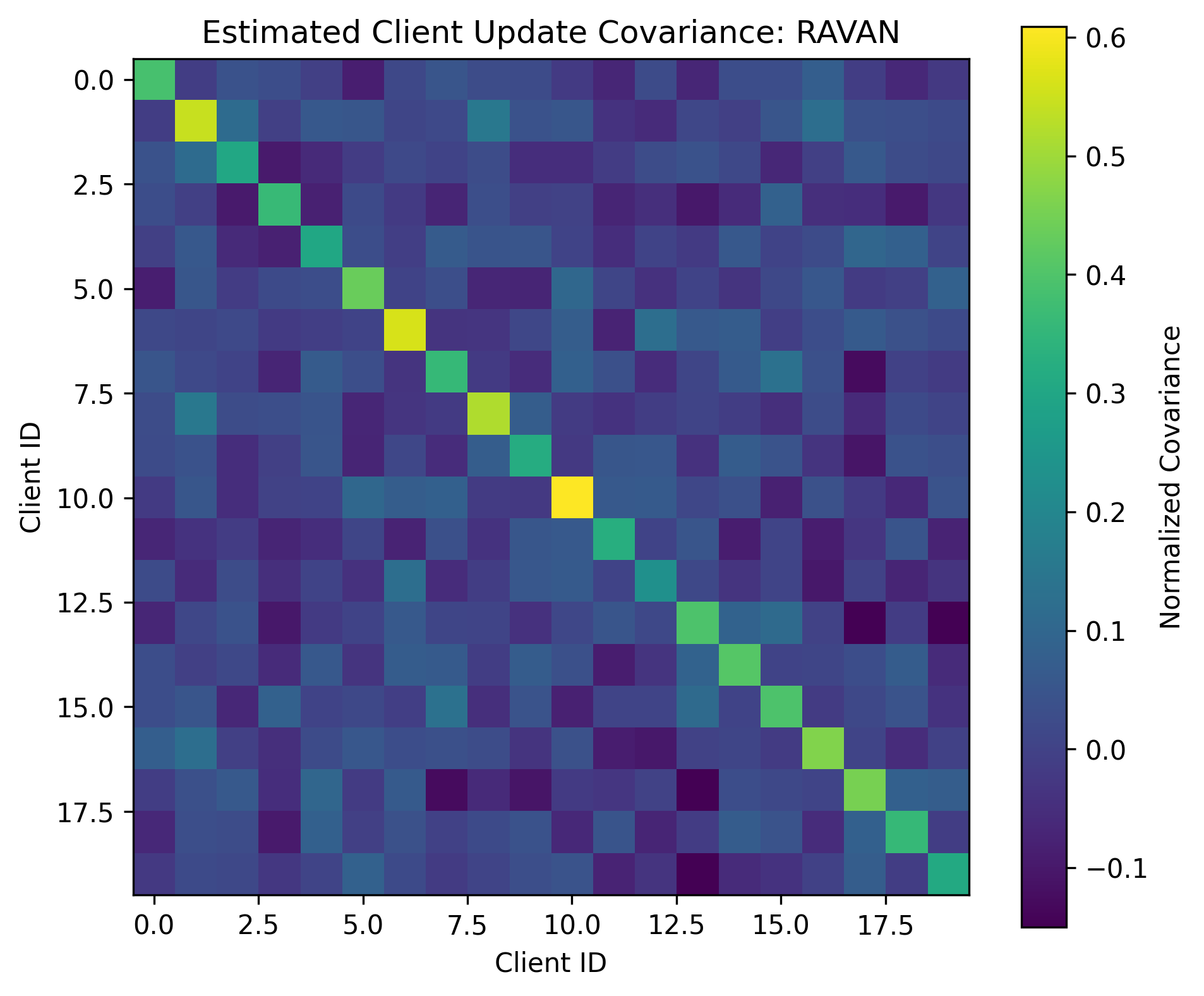}
    \small{(a) RAVAN}
\end{minipage}
\hfill
\begin{minipage}[t]{0.48\linewidth}
    \centering
    \includegraphics[width=\linewidth]{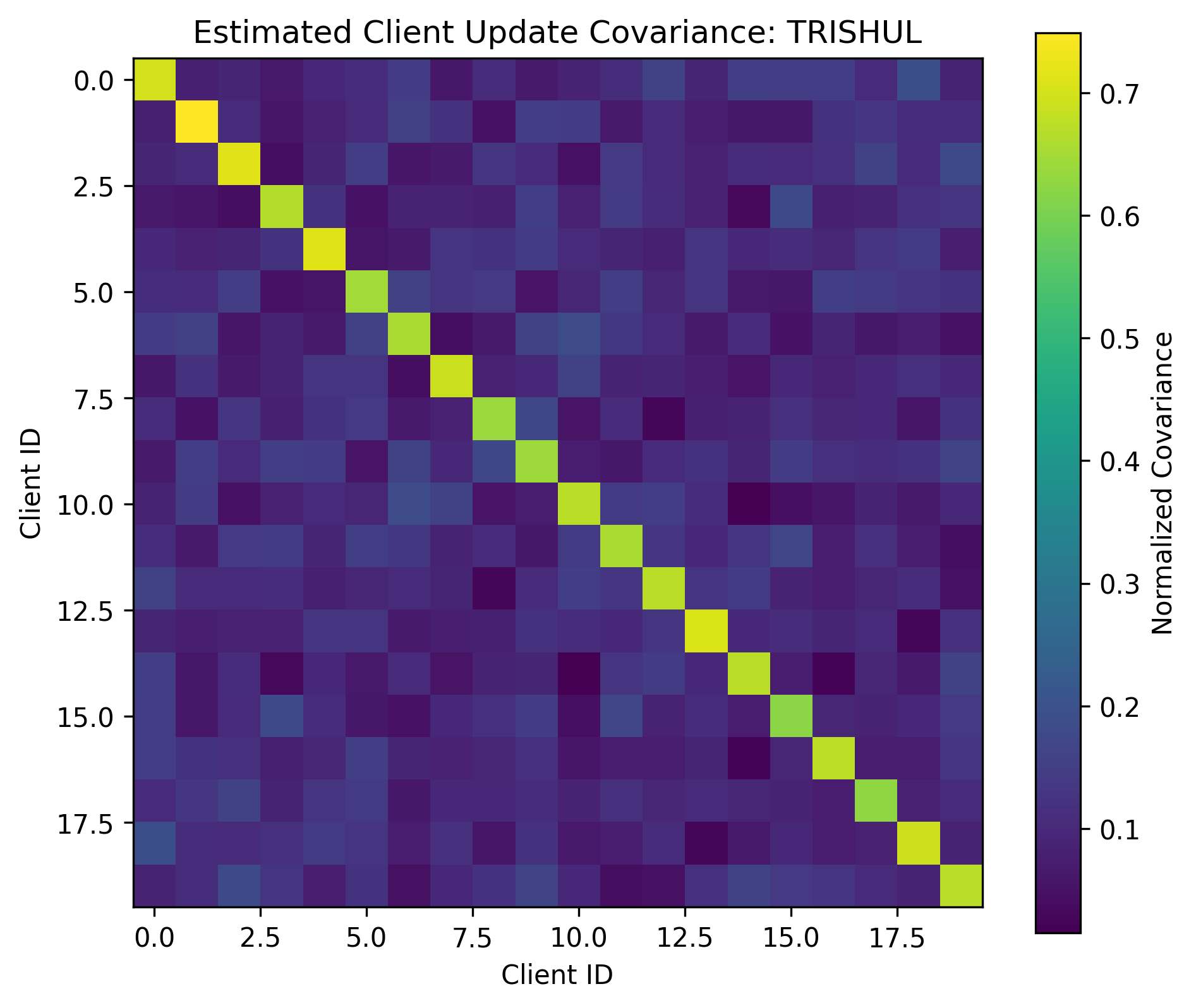}
    \small{(b) TRISHUL}
\end{minipage}
\caption{Client update covariance matrices under non-IID CIFAR-100 ($\alpha=0.3$, 20 clients, lower budget). \textbf{(a)} RAVAN exhibits a noisy structure with high off-diagonal variation, indicating strongly client-specific updates with little shared direction across clients. \textbf{(b)} TRISHUL produces a substantially cleaner structure with a more pronounced diagonal and suppressed off-diagonal entries, reflecting improved cross-client update coherence after nuclear-norm spectral shrinkage. This structural improvement directly reduces aggregation variance and improves global model quality.}
\label{fig:covariance}
\end{figure}

\subsubsection{Influence of Concave Water-Filling Head Allocation}
\label{subsubsec:waterfilling_ablation}
We compare the proposed concave water-filling allocation against three alternatives: uniform allocation, gradient-norm-based allocation, and loss-curvature-based allocation. As shown in Table~\ref{tab:ablation_waterfilling}, concave water-filling performs best across all datasets. The two heuristic alternatives improve on uniform assignment, which supports the general importance of adaptive capacity allocation, but both require additional statistics to be estimated during training. Concave water-filling, by contrast, is computed once from pretrained layer norms and incurs essentially zero runtime overhead. This makes it both more principled and more practical for edge deployment.

\subsubsection{Role of Trainable Scaling Factors}
\label{subsubsec:scaling_factors}

Trainable head-wise scaling factors allow clients to modulate the relative importance of different adaptation heads. Table~\ref{tab:ablation_scaling} shows that learning these scalars consistently improves performance, especially under non-IID data. The improvements suggest that local tasks do not use all adaptation heads equally and that lightweight head-wise modulation helps each client exploit the fixed adaptation budget more effectively.

\subsubsection{Effect of Initialization}
\label{subsubsec:ablation_init}

Because the basis matrices remain frozen during training, their initialization influences the expressive geometry of the entire multi-head adaptation space. Table~\ref{tab:ablation_init} shows that Gram--Schmidt initialization is best for vision tasks, whereas random-normal initialization performs best on 20~Newsgroups. Shared-subspace and constant initializations perform substantially worse, suggesting that inter-head subspace diversity is important for effective multi-head adaptation.

\subsubsection{Computational Heterogeneity and Head Selection}
\label{subsubsec:comp_heterogeneity}

Finally, we study computational heterogeneity by assigning clients different trainable budgets drawn from bell-shaped, uniform, and skewed-right distributions. Clients with lower budgets update only a subset of heads. Table~\ref{tab:ablation_heterogeneity} compares three head-selection strategies: random, weight-based, and gradient-based. Random selection performs best overall. Under non-IID data, weight-based and gradient-based selection tend to prefer locally dominant heads, which can reflect overfitting to client-specific distributions. Random selection avoids this bias and yields more globally representative updates. TRISHUL with random selection outperforms both HetLoRA and FlexLoRA across all heterogeneity profiles, indicating that partial-head freezing is an effective mechanism for handling device heterogeneity.

% \subsection{Limitations}
% \label{subsec:limitations}
% TRISHUL assumes that the shared frozen bases span useful adaptation subspaces for the downstream task. If the basis initialization is poorly aligned with task-relevant directions, the method may preserve compact but suboptimal update modes. The water-filling allocation is computed once from pretrained layer capacity and does not adapt to task-specific sensitivity during training. TRISHUL also follows the standard FL raw-data non-sharing assumption but does not provide formal privacy guarantees by itself; secure aggregation or differential privacy is still required when update leakage is a concern. Finally, the present evaluation is simulation-based and should be extended to real edge deployments with wireless variability, stragglers, energy constraints, intermittent participation, and quantized communication.

\section{Conclusion and Future Work}
\label{sec:conclusion}

We developed {TRISHUL} as a three-pronged spectral control framework for heterogeneous federated PEFT. TRISHUL addresses two central failure modes of federated PEFT: unstable client update spectra that cause destructive aggregation interference, and naive uniform allocation of limited adaptation capacity across layers. It does so through three complementary components: exact multi-head low-rank aggregation, nuclear-norm proximal spectral shrinkage, and concave water-filling head allocation.

The experimental results show that these components work together effectively. Across vision tasks, language tasks, and billion-parameter model fine-tuning, TRISHUL consistently outperforms strong federated PEFT baselines. The gains are largest under stronger heterogeneity, which supports the main claim of the paper that spectral inconsistency is a fundamental source of federated PEFT degradation and must be controlled explicitly. At the same time, the method preserves practical efficiency: it introduces no additional communication overhead and adds less than $1\%$ computational overhead per round.

Importantly, TRISHUL is a spectral-control framework that filters client-specific singular directions before aggregation, reducing effective rank, spectral entropy, and inter-client variance while preserving PEFT-level communication efficiency. Several directions remain open. First, the current head allocation is fixed before training; dynamic allocation strategies that adapt to evolving layer importance may yield further gains. Second, all heads currently use the same rank $r$; heterogeneous-rank variants could improve parameter efficiency. Third, extending TRISHUL to differentially private federated learning is promising, since spectrally compact updates may admit tighter privacy calibration. Finally, adaptive selection of the regularization strength $\lambda$ and robustness under low-precision or quantized training merit further investigation. Overall, the results suggest that explicit spectral control is a powerful and practical principle for federated fine-tuning of large models at the edge.

\bibliographystyle{IEEEtran}
\bibliography{references}
\newpage
\appendix

\section{Theory of TRISHUL}
\label{app:theory}

This appendix formalizes the three mechanisms used by TRISHUL: exact shared-basis aggregation, rank-controlled spectral shrinkage, and concave water-filling allocation. The results are intended to characterize the mechanism; they do not claim that spectral shrinkage alone solves all statistical heterogeneity in FL.

\subsection{Preliminaries and Assumptions}

We consider the federated objective
\begin{equation}
    F(\mathbf{W})
    =
    \sum_{c=1}^{C}p_c\mathcal{L}_c(\mathbf{W}),
    \qquad
    p_c=\frac{n_c}{\sum_j n_j}.
    \label{eq:appendix_fl_obj}
\end{equation}
For a fixed layer and head, define the uploaded core update
\begin{equation}
    \mathbf{Z}_{c,i}
    =
    s_{c,i}\widetilde{\mathbf{H}}_{c,i},
    \label{eq:uploaded_core}
\end{equation}
where $\widetilde{\mathbf{H}}_{c,i}$ is the post-SVT core.

\begin{assumption}[Smooth local objectives]
Each local objective $\mathcal{L}_c$ is $L$-smooth.
\end{assumption}

\begin{assumption}[Bounded post-SVT cores]
For every client $c$ and head $i$, the post-SVT core satisfies
\[
    \|\widetilde{\mathbf{H}}_{c,i}\|_2\leq M_i,
    \qquad
    |s_{c,i}|\leq s_{\max}.
\]
\end{assumption}

\begin{assumption}[Unbiased client sampling]
Participating clients are sampled independently according to the aggregation weights used by the server.
\end{assumption}

\subsection{Exactness of Shared-Basis Aggregation}

\begin{theorem}[Exact weighted aggregation]
\label{thm:exact}
Let $\mathbf{B}_{l,i}$ and $\mathbf{A}_{l,i}$ be frozen and shared across clients. For any weights $\{\omega_c\}$ satisfying $\sum_c\omega_c=1$, the aggregated ambient update satisfies
\begin{equation}
    \sum_c \omega_c
    \sum_i
    \mathbf{B}_{l,i}
    (s_{c,l,i}\mathbf{H}_{c,l,i})
    \mathbf{A}_{l,i}
    =
    \sum_i
    \mathbf{B}_{l,i}
    \left(
    \sum_c \omega_c s_{c,l,i}\mathbf{H}_{c,l,i}
    \right)
    \mathbf{A}_{l,i}.
    \label{eq:appendix_exact}
\end{equation}
\end{theorem}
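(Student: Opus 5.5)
The identity in Theorem~\ref{thm:exact} is purely algebraic, and I would prove it using only bilinearity of matrix multiplication and the fact that the bases carry no client index. Note first that the left-hand side involves finitely many terms: the participating clients and the heads $i=1,\dots,h_l$. So both sums are finite and may be interchanged freely. I would rewrite the left-hand side as $\sum_i\sum_c \omega_c\,\mathbf{B}_{l,i}(s_{c,l,i}\mathbf{H}_{c,l,i})\mathbf{A}_{l,i}$.

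Next, fix a head $i$. The scalar $\omega_c$ commutes with matrix products, so each summand equals $\mathbf{B}_{l,i}(\omega_c s_{c,l,i}\mathbf{H}_{c,l,i})\mathbf{A}_{l,i}$. The key point is that $\mathbf{B}_{l,i}$ and $\mathbf{A}_{l,i}$ are frozen and shared, hence identical for every client $c$. Left multiplication by $\mathbf{B}_{l,i}$ and right multiplication by $\mathbf{A}_{l,i}$ are linear maps $\mathbb{R}^{r\times r}\to\mathbb{R}^{d\times d}$. Applying distributivity, $\mathbf{X}\mathbf{Y}\mathbf{Z}+\mathbf{X}\mathbf{Y}'\mathbf{Z}=\mathbf{X}(\mathbf{Y}+\mathbf{Y}')\mathbf{Z}$, inductively over $c$ gives $\sum_c \mathbf{B}_{l,i}(\omega_c s_{c,l,i}\mathbf{H}_{c,l,i})\mathbf{A}_{l,i}=\mathbf{B}_{l,i}\bigl(\sum_c\omega_c s_{c,l,i}\mathbf{H}_{c,l,i}\bigr)\mathbf{A}_{l,i}$. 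Summing over $i$ yields the right-hand side of \eqref{eq:appendix_exact}.

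There is no genuine obstacle here. The one point worth stating explicitly is the contrast with \eqref{eq:lora_bias}: the argument would fail if the outer factors depended on $c$, since then $\sum_c\omega_c\mathbf{B}_c\mathbf{H}_c\mathbf{A}_c$ could not be factored. Shared, frozen bases are exactly what allows the factoring.

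Two remarks would complete the proof. First, the normalization $\sum_c\omega_c=1$ is not used in the identity itself. It is needed only to interpret the right-hand side as a weighted mean of client ambient updates, which is what the server computes via \eqref{eq:server_aggregation}. Second, for head-wise partial participation, where the weights are $\omega_{c,l,i}$ over $\mathcal{C}^{(t)}_{l,i}$, the same argument applies per head, because the factoring step never mixes different heads. Orthonormality of the bases is likewise not required for this theorem; it matters only for the spectral equivalence \eqref{eq:norm_equiv}.
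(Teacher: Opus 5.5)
Your proof is correct and follows the same route as the paper, which likewise rests only on linearity of the finite sums and on the bases $\mathbf{B}_{l,i},\mathbf{A}_{l,i}$ carrying no client index. Your side remarks also hold: the identity does not use $\sum_c\omega_c=1$ or orthonormality, and the same argument applies per head when the weights are head-wise.
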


\begin{proof}
The result follows by linearity of summation and by the fact that the bases are fixed with respect to the client index:
\[
\sum_c\omega_c\sum_i \mathbf{B}_{l,i}\mathbf{Z}_{c,l,i}\mathbf{A}_{l,i}
=
\sum_i\mathbf{B}_{l,i}\left(\sum_c\omega_c\mathbf{Z}_{c,l,i}\right)\mathbf{A}_{l,i}.
\]
\end{proof}

\begin{remark}
Theorem~\ref{thm:exact} removes the algebraic factorization bias of standard LoRA aggregation, where generally $\mathbb{E}[\mathbf{B}_c\mathbf{A}_c]\neq \mathbb{E}[\mathbf{B}_c]\mathbb{E}[\mathbf{A}_c]$. It does not imply that the aggregated update is statistically optimal under arbitrary non-IID data; spectral shrinkage is introduced to reduce the remaining client-update variance.
\end{remark}

\subsection{Rank-Controlled Variance Reduction}

\begin{theorem}[Rank-controlled aggregation variance]
\label{thm:variance}
Let $\widetilde{\mathbf{H}}_{c,i}$ be obtained by SVT:
\begin{equation}
    \widetilde{\mathbf{H}}_{c,i}
    =
    \operatorname{prox}_{\lambda\eta\|\cdot\|_*}
    (\mathbf{H}_{c,i}^{\mathrm{grad}}).
\end{equation}
Suppose $\operatorname{rank}(\widetilde{\mathbf{H}}_{c,i})\leq \rho_i$, $\|\widetilde{\mathbf{H}}_{c,i}\|_2\leq M_i$, and $|s_{c,i}|\leq s_{\max}$. For $m$ independently sampled clients,
\begin{equation}
    \mathbb{E}
    \left\|
    \frac{1}{m}\sum_{c=1}^{m}
    \left(
    s_{c,i}\widetilde{\mathbf{H}}_{c,i}
    -
    \mathbb{E}[s_{c,i}\widetilde{\mathbf{H}}_{c,i}]
    \right)
    \right\|_F^2
    \leq
    \frac{s_{\max}^2\rho_i M_i^2}{m}.
    \label{eq:appendix_variance}
\end{equation}
\end{theorem}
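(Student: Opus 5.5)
The plan is the standard variance computation for an average of i.i.d.\ zero-mean random matrices, followed by a deterministic Frobenius bound on each summand that uses the rank and spectral-norm hypotheses. Set $\mathbf{Z}_c=s_{c,i}\widetilde{\mathbf{H}}_{c,i}$ as in \eqref{eq:uploaded_core} and $\mathbf{X}_c=\mathbf{Z}_c-\mathbb{E}[\mathbf{Z}_c]$. By the unbiased-sampling assumption the $\mathbf{X}_c$ are independent with mean zero. Expand the squared Frobenius norm through the trace inner product:
\[
\mathbb{E}\Bigl\|\tfrac1m\textstyle\sum_c\mathbf{X}_c\Bigr\|_F^2=\tfrac1{m^2}\textstyle\sum_{c,c'}\mathbb{E}\langle\mathbf{X}_c,\mathbf{X}_{c'}\rangle_F .
\]
For $c\neq c'$, independence and zero mean make the cross terms vanish. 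What remains is $\frac{1}{m^2}\sum_c\mathbb{E}\|\mathbf{X}_c\|_F^2$.

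The next step bounds each centered second moment by the raw one. The identity $\mathbb{E}\|\mathbf{Z}-\mathbb{E}\mathbf{Z}\|_F^2=\mathbb{E}\|\mathbf{Z}\|_F^2-\|\mathbb{E}\mathbf{Z}\|_F^2\le\mathbb{E}\|\mathbf{Z}\|_F^2$ does this directly. Then bound $\|\mathbf{Z}_c\|_F^2$ pointwise. We have $\|\mathbf{Z}_c\|_F^2=s_{c,i}^2\|\widetilde{\mathbf{H}}_{c,i}\|_F^2\le s_{\max}^2\|\widetilde{\mathbf{H}}_{c,i}\|_F^2$. Because $\widetilde{\mathbf{H}}_{c,i}$ has at most $\rho_i$ nonzero singular values, each at most $\|\widetilde{\mathbf{H}}_{c,i}\|_2\le M_i$, we get $\|\widetilde{\mathbf{H}}_{c,i}\|_F^2=\sum_{j\le\rho_i}\widetilde\sigma_j^2\le\rho_iM_i^2$, which is exactly \eqref{eq:rank_frobenius}. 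Here the SVT form \eqref{eq:svt} only explains where the rank bound comes from. Summing the $m$ identical bounds and dividing by $m^2$ gives $s_{\max}^2\rho_iM_i^2/m$.

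The main point needing care is the independence and identical-centering setup. If clients are drawn from a population, possibly with weights, independently, then the $\mathbf{Z}_c$ are i.i.d.\ draws and the cross terms vanish cleanly. If instead the clients are fixed but heterogeneous, I would center each $\mathbf{Z}_c$ at its own mean and assume independence across clients. The per-term bound above then still holds term by term, since it is pointwise. The argument never needs the samples to be identically distributed, only independence and mean zero, so I would phrase the proof that way to cover both readings of the assumption.
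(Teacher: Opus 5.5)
Your proposal is correct and follows essentially the same route as the paper: a pointwise bound $\|s_{c,i}\widetilde{\mathbf{H}}_{c,i}\|_F^2\le s_{\max}^2\rho_i M_i^2$ from the rank and spectral-norm hypotheses, cross terms vanishing under independence, and $\mathbb{E}\|\mathbf{Z}-\mathbb{E}\mathbf{Z}\|_F^2\le\mathbb{E}\|\mathbf{Z}\|_F^2$. Your remark that only independence and per-client centering are needed is a harmless mild generalization; the paper's written step $\frac{1}{m}\mathbb{E}\|\mathbf{Z}_{c,i}-\mathbb{E}\mathbf{Z}_{c,i}\|_F^2$ implicitly assumes identically distributed clients.
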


\begin{proof}
For any matrix $\mathbf{X}$ with rank at most $\rho_i$,
\[
    \|\mathbf{X}\|_F^2
    =
    \sum_{j=1}^{\rho_i}\sigma_j^2(\mathbf{X})
    \leq
    \rho_i\|\mathbf{X}\|_2^2.
\]
Thus $\|s_{c,i}\widetilde{\mathbf{H}}_{c,i}\|_F^2\leq s_{\max}^2\rho_iM_i^2$. The independence of sampled clients gives
\[
    \mathbb{E}\left\|
    \frac{1}{m}\sum_{c=1}^{m}(\mathbf{Z}_{c,i}-\mathbb{E}\mathbf{Z}_{c,i})
    \right\|_F^2
    =
    \frac{1}{m}
    \mathbb{E}\|\mathbf{Z}_{c,i}-\mathbb{E}\mathbf{Z}_{c,i}\|_F^2
\]
\[
    \leq
    \frac{1}{m}\mathbb{E}\|\mathbf{Z}_{c,i}\|_F^2,
\]
which proves the claim.
\end{proof}

\begin{remark}[Bias--variance tradeoff]
Increasing $\lambda$ increases the SVT threshold $\tau=\lambda\eta$, so fewer singular modes survive and $\rho_i$ decreases. This tightens the variance bound in~\eqref{eq:appendix_variance}, but it may also increase approximation bias by removing task-relevant modes.
\end{remark}

\subsection{Optimality of Concave Water-Filling Allocation}

\begin{theorem}[Water-filling solution]
\label{thm:waterfill}
Let $a_l=\|\mathbf{W}_l\|_F^2+\epsilon>0$. The solution of
\begin{align}
    \max_{\{h_l\geq0\}}
    \;&
    \sum_{l=1}^{L}a_l\log(1+h_l)
    \nonumber\\
    \mathrm{s.t.}
    \;&
    \sum_{l=1}^{L}h_l=h
    \label{eq:appendix_water_obj}
\end{align}
is
\begin{equation}
    h_l^*
    =
    \left[\frac{a_l}{\mu}-1\right]_+,
    \label{eq:appendix_water_solution}
\end{equation}
where $\mu>0$ is chosen such that $\sum_l h_l^*=h$.
\end{theorem}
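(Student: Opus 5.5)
This is a convex program, so I will prove it through the KKT conditions, which are necessary and sufficient here. First I will establish structure. The objective $f(h)=\sum_l a_l\log(1+h_l)$ is strictly concave on $\{h_l> -1\}$, because each $a_l>0$ and $\log(1+\cdot)$ is strictly concave. The feasible set (the simplex $\{h_l\ge 0,\ \sum_l h_l=h\}$) is compact, convex and nonempty for $h\ge 0$. So a maximizer exists, and by strict concavity it is unique. The constraints are affine, so Slater's condition is not even needed and the KKT conditions are necessary and sufficient for optimality.

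Next I will write the Lagrangian
\[
\mathcal{L}(h,\mu,\nu)=\sum_l a_l\log(1+h_l)-\mu\Bigl(\sum_l h_l-h\Bigr)+\sum_l \nu_l h_l ,
\]
with $\nu_l\ge 0$. This gives the conditions
\[
\frac{a_l}{1+h_l}-\mu+\nu_l=0,\qquad \nu_l h_l=0,\qquad h_l\ge0 .
\]
I then split into two cases. If $h_l>0$, then $\nu_l=0$ and $h_l=a_l/\mu-1$, which needs $a_l/\mu>1$. If $h_l=0$, then $\nu_l=\mu-a_l\ge0$, so $a_l/\mu\le 1$. In both cases $h_l=[a_l/\mu-1]_+$, which matches the inequality stated in \eqref{eq:waterfill_sol}. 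The multiplier must satisfy $\mu>0$: if $\mu\le 0$, stationarity would force $a_l/(1+h_l)+\nu_l\le 0$, which is impossible since $a_l>0$.

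The remaining step, which I expect to be the only real work, is to show that a water level $\mu>0$ with $\sum_l h_l^*=h$ exists and is unique. Define $g(\mu)=\sum_l[a_l/\mu-1]_+$ on $(0,\infty)$. Each summand is continuous and nonincreasing, and it is strictly decreasing wherever it is positive. As $\mu\to0^+$, $g(\mu)\to\infty$, and $g(\mu)=0$ for $\mu\ge\max_l a_l$. By the intermediate value theorem there is a $\mu$ with $g(\mu)=h$ for every $h\ge0$. For $h>0$ this $\mu$ is unique by strict monotonicity on $\{g>0\}$. For $h=0$ any $\mu\ge\max_l a_l$ works, and all of them give the same $h^*=0$.

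Finally I close the argument. The point $h^*$ built from this $\mu$, together with $\nu_l=[\mu-a_l]_+$, satisfies all the KKT conditions, so it is optimal. By uniqueness of the maximizer it is \emph{the} solution of \eqref{eq:appendix_water_obj}. This also yields \eqref{eq:waterfill_dominates}, since the uniform allocation is feasible.
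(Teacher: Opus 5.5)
Your proof is correct and follows the same route as the paper: form the Lagrangian with multipliers $\mu$ and $\nu_l\ge 0$, apply stationarity and complementary slackness, and split into active and inactive layers to obtain $h_l^*=[a_l/\mu-1]_+$. Your extra steps fill in points the paper's proof leaves implicit: KKT sufficiency from concavity with affine constraints, $\mu>0$, existence and uniqueness of the water level via monotonicity of $g(\mu)$, uniqueness of the maximizer by strict concavity, and $a_l\le\mu$ on inactive layers.
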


\begin{proof}
The Lagrangian is
\[
    \mathcal{L}
    =
    \sum_l a_l\log(1+h_l)
    -
    \mu\left(\sum_l h_l-h\right)
    +
    \sum_l\nu_l h_l,
\]
with $\nu_l\geq0$. The KKT stationarity condition gives
\[
    \frac{a_l}{1+h_l}-\mu+\nu_l=0.
\]
For active layers $h_l>0$, $\nu_l=0$ and $h_l=a_l/\mu-1$. For inactive layers, complementary slackness gives $h_l=0$. Combining both cases yields~\eqref{eq:appendix_water_solution}.
\end{proof}

\begin{corollary}[Utility dominance over uniform allocation]
Let $h_l^{\mathrm{unif}}=h/L$. Since $\{h_l^*\}$ solves~\eqref{eq:appendix_water_obj},
\begin{equation}
    \sum_{l=1}^{L}a_l\log(1+h_l^*)
    \geq
    \sum_{l=1}^{L}a_l\log(1+h_l^{\mathrm{unif}}).
\end{equation}
The inequality is strict when the uniform allocation does not satisfy the KKT conditions.
\end{corollary}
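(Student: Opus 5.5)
The corollary follows from two facts: the uniform allocation is feasible, and Theorem~\ref{thm:waterfill} gives the global maximizer of a strictly concave program. The plan is to establish feasibility and optimality first, then treat strictness as a uniqueness argument.

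\emph{Feasibility.} I check that $h_l^{\mathrm{unif}}=h/L\ge 0$ and that $\sum_l h/L=h$. So the uniform allocation lies in the feasible set of~\eqref{eq:appendix_water_obj}.

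\emph{Optimality of $h^*$.} The objective $U(\mathbf{h})=\sum_l a_l\log(1+h_l)$ is strictly concave on the polytope $\{h_l\ge0,\ \sum_l h_l=h\}$. This holds because each $a_l>0$ and $\log(1+\cdot)$ is strictly concave, so the Hessian $\operatorname{diag}(-a_l/(1+h_l)^2)$ is negative definite. The constraints are affine, so Slater's condition holds trivially (for instance at the uniform point itself when $h>0$). KKT is therefore necessary and sufficient for global optimality. Theorem~\ref{thm:waterfill} exhibits $h^*$ satisfying KKT with a water level $\mu$. The map $\mu\mapsto\sum_l[a_l/\mu-1]_+$ is continuous and nonincreasing, runs from $+\infty$ down to $0$, and is strictly decreasing wherever it is positive, so such a $\mu$ exists and is unique. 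Hence $U(h^*)\ge U(\mathbf{h})$ for every feasible $\mathbf{h}$, and in particular for $h^{\mathrm{unif}}$. This gives the weak inequality.

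\emph{Strictness.} Strict concavity makes the maximizer unique. Since KKT characterizes optimality here, $h^{\mathrm{unif}}$ is optimal if and only if it satisfies KKT. So if it violates KKT, then $h^{\mathrm{unif}}\neq h^*$, and uniqueness gives $U(h^*)>U(h^{\mathrm{unif}})$.

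To make the condition concrete, I would also note what KKT means at the uniform point. Every $h^{\mathrm{unif}}_l>0$, so all multipliers $\nu_l=0$, and stationarity requires $a_l/(1+h/L)=\mu$ for all $l$. That forces all $a_l$ to be equal. Thus the inequality is strict exactly when the capacities $a_l$ are not all identical, which matches the statement in~\eqref{eq:waterfill_dominates}.

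The only real obstacle is justifying that KKT is sufficient and that the maximizer is unique; convexity and Slater's condition settle both. One further caveat, though not part of the claim, is that the rounded integer allocation need not inherit this dominance, so the corollary should be read as a statement about the continuous $h^*$.
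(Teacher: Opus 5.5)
Your proposal is correct and follows the paper's route. The paper justifies the weak inequality by the feasibility of $h_l^{\mathrm{unif}}=h/L$ together with the optimality of $\{h_l^*\}$ for~\eqref{eq:appendix_water_obj}, and it asserts strictness from the KKT characterization without further argument. You fill in what the paper leaves implicit: concavity with affine constraints makes KKT necessary and sufficient, strict concavity gives a unique maximizer, and for $h>0$ the uniform point satisfies KKT exactly when all $a_l$ coincide, which matches~\eqref{eq:waterfill_dominates}.
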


\subsection{Convergence to a Stationary Point}
\label{subsec:convergence_appendix}

We now show that TRISHUL converges to a stationary point of the composite nuclear-norm regularized federated PEFT objective. Since the pretrained weights and bases $\{\mathbf{B}_{l,i},\mathbf{A}_{l,i}\}$ are frozen, the optimization variables are only the compact cores and head scalars. Let
\[
    \boldsymbol{\theta}
    =
    \operatorname{vec}
    \left(
    \{\mathbf{H}_{l,i}\}_{l,i},
    \{s_{l,i}\}_{l,i}
    \right)
\]
denote the concatenated trainable TRISHUL parameter vector. For client $c$, define
\[
    f_c(\boldsymbol{\theta})
    =
    \mathcal{L}_c
    \left(
    \mathbf{W}^0+\Delta\mathbf{W}(\boldsymbol{\theta})
    \right),
    \qquad
    F(\boldsymbol{\theta})
    =
    \sum_{c=1}^{C}p_c f_c(\boldsymbol{\theta}).
\]
The composite TRISHUL objective is
\begin{equation}
    \Phi(\boldsymbol{\theta})
    =
    F(\boldsymbol{\theta})
    +
    \Psi(\boldsymbol{\theta}),
    \label{eq:composite_objective}
\end{equation}
where
\begin{equation}
    \Psi(\boldsymbol{\theta})
    =
    \lambda
    \sum_{l=1}^{L}
    \sum_{i=1}^{h_l}
    \|\mathbf{H}_{l,i}\|_*
    +
    \iota_{\mathcal{S}}(\boldsymbol{\theta}),
    \label{eq:composite_regularizer}
\end{equation}
and $\iota_{\mathcal{S}}$ is the indicator of the scalar-feasible set
\[
    \mathcal{S}
    =
    \{\boldsymbol{\theta}:0\leq s_{l,i}\leq s_{\max},\ \forall l,i\}.
\]
Thus the nuclear-norm SVT step and scalar clipping are both represented by the proximal operator of $\Psi$.

Because $\Psi$ is nonsmooth, stationarity is measured by the proximal gradient mapping
\begin{equation}
    \mathcal{G}_{\eta}(\boldsymbol{\theta})
    =
    \frac{1}{\eta}
    \left[
    \boldsymbol{\theta}
    -
    \operatorname{prox}_{\eta\Psi}
    \left(
    \boldsymbol{\theta}-\eta\nabla F(\boldsymbol{\theta})
    \right)
    \right].
    \label{eq:proximal_gradient_mapping}
\end{equation}
A point $\boldsymbol{\theta}$ is stationary for $\Phi$ if
$\mathcal{G}_{\eta}(\boldsymbol{\theta})=\mathbf{0}$, equivalently
\[
    \mathbf{0}
    \in
    \nabla F(\boldsymbol{\theta})
    +
    \partial\Psi(\boldsymbol{\theta}).
\]

\begin{assumption}[Smoothness]
\label{assump:conv_smoothness}
Each client objective $f_c$ is $L$-smooth in the compact TRISHUL variables, i.e.,
\[
    \|\nabla f_c(\boldsymbol{\theta})-\nabla f_c(\boldsymbol{\theta}')\|
    \leq
    L\|\boldsymbol{\theta}-\boldsymbol{\theta}'\|
\]
for all $\boldsymbol{\theta},\boldsymbol{\theta}'$. Consequently, $F$ is also $L$-smooth.
\end{assumption}

\begin{assumption}[Unbiased stochastic gradients]
\label{assump:conv_unbiased}
At each local step, client $c$ computes a stochastic gradient $\mathbf{g}_c(\boldsymbol{\theta};\xi)$ satisfying
\[
    \mathbb{E}_{\xi}[\mathbf{g}_c(\boldsymbol{\theta};\xi)]
    =
    \nabla f_c(\boldsymbol{\theta}),
\]
and
\[
    \mathbb{E}_{\xi}
    \|\mathbf{g}_c(\boldsymbol{\theta};\xi)-\nabla f_c(\boldsymbol{\theta})\|^2
    \leq
    \sigma^2.
\]
\end{assumption}

\begin{assumption}[Bounded client heterogeneity]
\label{assump:conv_heterogeneity}
There exists $\zeta^2\geq0$ such that, for all $\boldsymbol{\theta}$,
\[
    \sum_{c=1}^{C}p_c
    \|\nabla f_c(\boldsymbol{\theta})-\nabla F(\boldsymbol{\theta})\|^2
    \leq
    \zeta^2.
\]
\end{assumption}

\begin{assumption}[Bounded second moment]
\label{assump:conv_second_moment}
There exists $G^2>0$ such that, for all clients and all local iterates,
\[
    \mathbb{E}\|\mathbf{g}_c(\boldsymbol{\theta};\xi)\|^2\leq G^2.
\]
\end{assumption}

\begin{assumption}[Client sampling and partial-head aggregation]
\label{assump:conv_sampling}
At each communication round, $m$ clients are sampled independently using probabilities consistent with the aggregation weights. For partial-head participation, let $m_{\mathrm{eff}}$ denote the minimum effective number of clients contributing to any active head in a round. Server aggregation is unbiased with respect to the sampled clients.
\end{assumption}

\begin{assumption}[Lower bounded objective]
\label{assump:conv_lower_bound}
The composite objective is bounded below:
\[
    \Phi^\star
    =
    \inf_{\boldsymbol{\theta}}\Phi(\boldsymbol{\theta})
    >
    -\infty.
\]
\end{assumption}

\begin{theorem}[Convergence to a stationary point]
\label{thm:stationary_convergence}
Suppose Assumptions~\ref{assump:conv_smoothness}--\ref{assump:conv_lower_bound} hold. Let TRISHUL run for $T$ communication rounds with $S$ local proximal steps per selected client and stepsize
\[
    0<\eta\leq \frac{1}{8LS}.
\]
Let $\boldsymbol{\theta}^{(t)}$ denote the global TRISHUL parameter vector at the beginning of round $t$, and let $\bar t$ be sampled uniformly from $\{0,1,\ldots,T-1\}$. Then
\begin{align}
    \mathbb{E}
    \left[
    \|\mathcal{G}_{\eta}(\boldsymbol{\theta}^{(\bar t)})\|^2
    \right]
    \leq
    &\frac{8\left(\Phi(\boldsymbol{\theta}^{(0)})-\Phi^\star\right)}{\eta S T}
    +
    \frac{16L\eta\sigma^2}{m_{\mathrm{eff}}}
    \nonumber\\
    &+
    64L^2\eta^2(S-1)(G^2+\zeta^2).
    \label{eq:stationary_bound}
\end{align}
Consequently, for fixed $S$ and $\eta=\Theta(1/\sqrt{ST})$,
\begin{align}
    \mathbb{E}
    \left[
    \|\mathcal{G}_{\eta}(\boldsymbol{\theta}^{(\bar t)})\|^2
    \right]
    =
    &\mathcal{O}\!\left(\frac{1}{\sqrt{ST}}\right)
    +
    \mathcal{O}\!\left(\frac{\sigma^2}{m_{\mathrm{eff}}\sqrt{ST}}\right)\nonumber\\
    &+
    \mathcal{O}\!\left(\frac{(S-1)(G^2+\zeta^2)}{ST}\right).
    \label{eq:stationary_rate}
\end{align}
TUnder bounded stochastic variance and bounded client heterogeneity, TRISHUL converges to a stationary point of the nuclear-norm regularized federated PEFT objective.
\end{theorem}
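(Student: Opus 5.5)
The plan is to treat TRISHUL as an inexact, stochastic proximal-gradient method on the composite objective $\Phi=F+\Psi$ in \eqref{eq:composite_objective}. Each round is viewed as one ``virtual'' proximal step of effective length $\eta S$, perturbed by three error sources: minibatch noise, client drift across the $S$ local steps, and sampling of clients and heads. Two structural facts come first.

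First, $\Psi$ is convex and separable across cores and scalars. Its prox is exactly SVT on each $\mathbf{H}_{l,i}$, as in \eqref{eq:svt}, together with clipping of each $s_{l,i}$ to $[0,s_{\max}]$. The local update in Algorithm~\ref{alg:trishul} is therefore precisely $\boldsymbol{\theta}\leftarrow\operatorname{prox}_{\eta\Psi}(\boldsymbol{\theta}-\eta\mathbf{g}_c)$.

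Second, by Theorem~\ref{thm:exact}, server averaging of the uploaded cores coincides with averaging in ambient space. This lets us analyze the averaged iterate $\boldsymbol{\theta}^{(t+1)}=\sum_c\omega_c\boldsymbol{\theta}_c^{(t,S)}$ directly. The scalar reset to $1$ is feasible in $\mathcal{S}$, provided $s_{\max}\ge1$, which I will assume.

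The core is a one-step descent lemma for a prox-gradient step with a biased, noisy gradient. Let $\mathbf{d}$ denote the gradient actually used and $\mathbf{e}=\mathbf{d}-\nabla F(\boldsymbol{\theta})$. From $L$-smoothness (Assumption~\ref{assump:conv_smoothness}) and the three-point property of the prox of a convex $\Psi$, with $\eta\le 1/(8LS)\le 1/(2L)$, I aim for
\[
\Phi(\boldsymbol{\theta}^+)\le\Phi(\boldsymbol{\theta})-\tfrac{\eta}{4}\|\mathcal{G}_\eta(\boldsymbol{\theta})\|^2+c\,\eta\|\mathbf{e}\|^2 .
\]
Here $\mathcal{G}_\eta$ is the exact mapping \eqref{eq:proximal_gradient_mapping}. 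Nonexpansiveness of the prox controls the gap between the inexact and exact mappings by $\|\mathbf{e}\|$.

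Summing this inequality over the $S$ local steps, and using the fact that the prox is nonexpansive so that local iterates stay close to $\boldsymbol{\theta}^{(t)}$, should give per-round decrease $\tfrac{\eta S}{8}\|\mathcal{G}_\eta(\boldsymbol{\theta}^{(t)})\|^2$ minus error terms. Telescoping over $T$ rounds with Assumption~\ref{assump:conv_lower_bound} then yields the first term $8(\Phi^{(0)}-\Phi^\star)/(\eta ST)$.

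The error term splits into two parts.

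\textbf{(i) Stochastic part.} By Assumption~\ref{assump:conv_unbiased} and independent, unbiased sampling (Assumption~\ref{assump:conv_sampling}), cross terms vanish after averaging over clients. This reduces the variance to $\sigma^2/m_{\mathrm{eff}}$. As in Theorem~\ref{thm:variance}, the weakest head governs, hence $m_{\mathrm{eff}}$. Multiplying by the $L\eta$ factor from the smoothness quadratic gives $16L\eta\sigma^2/m_{\mathrm{eff}}$.

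\textbf{(ii) Drift part.} Assumptions~\ref{assump:conv_heterogeneity} and \ref{assump:conv_second_moment} give
\[
\mathbb{E}\|\boldsymbol{\theta}_c^{(t,k)}-\boldsymbol{\theta}^{(t)}\|^2\le 2\eta^2 k\,(G^2+\zeta^2),
\]
using prox nonexpansiveness around the fixed point $\operatorname{prox}_{\eta\Psi}$ of feasible points. Smoothness then bounds the gradient mismatch by $L^2$ times this, which produces the $64L^2\eta^2(S-1)(G^2+\zeta^2)$ term. The factor $S-1$ appears because drift vanishes at $k=0$.

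The rate \eqref{eq:stationary_rate} follows by substituting $\eta=\Theta(1/\sqrt{ST})$, which is admissible for $T$ large enough that $\eta\le 1/(8LS)$.

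The main obstacle is that the prox does not commute with averaging. The server averages post-SVT local iterates, and an average of SVT outputs is not the prox of an averaged gradient step, so $\boldsymbol{\theta}^{(t+1)}$ is not itself a prox-gradient step. The first approach I would try is to exploit convexity of $\Psi$. By Jensen, $\Psi(\sum_c\omega_c\boldsymbol{\theta}_c)\le\sum_c\omega_c\Psi(\boldsymbol{\theta}_c)$, so the nonsmooth part of $\Phi$ can only decrease under averaging, and the smooth part is handled by $L$-smoothness plus the drift bound.

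It would then remain to relate each client's per-step decrease, measured by its own local mapping, back to $\mathcal{G}_\eta(\boldsymbol{\theta}^{(t)})$. This again goes through prox nonexpansiveness and the drift bound, and it is where the constants $8$ and $64$ get fixed. If that comparison proves too lossy, the fallback is to measure stationarity at a virtual sequence, namely the prox of the averaged gradient, and to bound the distance from the virtual sequence to the true iterates by the same drift term.
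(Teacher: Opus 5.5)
Your outline follows the paper's own proof almost step for step: a per-step proximal descent inequality, a split into noise and drift, exact aggregation, then telescoping. It breaks at the same place, your step (i), and this is not a matter of constants, because \eqref{eq:stationary_bound} is false as stated.

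Your one-step lemma $\Phi(\boldsymbol{\theta}^+)\le\Phi(\boldsymbol{\theta})-\tfrac{\eta}{4}\|\mathcal{G}_\eta(\boldsymbol{\theta})\|^2+c\,\eta\|\mathbf{e}\|^2$ is correct. It follows from $L$-smoothness, the $1/\eta$-strong convexity of the prox subproblem, Young's inequality, and $\|\mathcal{G}_\eta(\boldsymbol{\theta})-\eta^{-1}(\boldsymbol{\theta}-\boldsymbol{\theta}^+)\|\le\|\mathbf{e}\|$. But telescoping it gives a noise term of order $\sigma^2$, not $L\eta\sigma^2$. The extra factor $L\eta$ that you take ``from the smoothness quadratic'' exists only for unconstrained SGD, where $\mathbb{E}\langle\nabla F,\mathbf{e}\rangle=0$ leaves just $\tfrac{L\eta^2}{2}\|\mathbf{e}\|^2$. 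With a prox, $\boldsymbol{\theta}^+$ depends nonlinearly on $\mathbf{e}$, so $\mathbb{E}\langle\mathbf{e},\boldsymbol{\theta}^+-\boldsymbol{\theta}\rangle\neq0$. Passing from the realized step to the exact mapping $\mathcal{G}_\eta(\boldsymbol{\theta})$ then costs another $\eta\|\mathbf{e}\|^2$.

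\textbf{Counterexample.} Take one client ($m_{\mathrm{eff}}=1$, $\zeta=0$), $S=1$, and a single $1\times1$ core $h$ with the scalar held at $1$. Let $F\equiv0$, which is $L$-smooth for every $L>0$, with sample losses $\xi h$, where $\xi=\pm\sigma$ with equal probability. Set $\sigma=3\lambda$, $G^2=\sigma^2$ and $h^{(0)}=0$, so $\Phi(\boldsymbol{\theta}^{(0)})=\Phi^\star=0$. The iteration is $h^{(t+1)}=\operatorname{soft}_{\lambda\eta}(h^{(t)}-\eta\xi_t)$, and $\mathcal{G}_\eta(h)=\lambda\operatorname{sign}(h)$ whenever $|h|\ge\lambda\eta$. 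If $|h^{(t)}|<\lambda\eta$ then $|h^{(t)}-\eta\xi_t|>2\lambda\eta$, hence $|h^{(t+1)}|>\lambda\eta$. So at least every other iterate has $\|\mathcal{G}_\eta\|^2=\lambda^2$, and the left side of \eqref{eq:stationary_bound} is at least $\lambda^2/3$ for every $T\ge2$ and every $\eta$. The right side equals $16L\eta\sigma^2=144L\eta\lambda^2$. This is smaller as soon as $\eta<1/(432L)$, and it tends to $0$ under $\eta=\Theta(1/\sqrt{ST})$. The same happens with $F=\tfrac{L}{2}h^2$ and $\sigma=4\lambda$.

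The paper's ``standard'' inequality \eqref{eq:prox_descent_step}, with error $2L\eta^2\|\mathbf{g}_c-\nabla F\|^2$, already fails in this example at $h=0$. There $\Phi(h^+)=2\lambda^2\eta$ surely, while its right side is $18L\eta^2\lambda^2$, which is smaller whenever $\eta<1/(9L)$. What your lemma honestly delivers is a non-vanishing $\mathcal{O}(\sigma^2)$ floor in $\|\mathcal{G}_\eta\|^2$. Removing it needs growing minibatches, or a stationarity measure whose parameter is decoupled from the step size, such as the gradient of a Moreau envelope of $\Phi$ with parameter of order $1/L$.

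Three further steps would need repair even apart from this.

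\textbf{Drift.} Your bound uses nonexpansiveness ``around the fixed point'', but SVT fixes only the zero core. Even with zero gradient, a local step moves the cores by up to $\eta\lambda\sqrt{Qr}$. Moreover, the local gradients have nonzero mean $\nabla f_c$, so the increments do not decorrelate. Hence $\mathbb{E}\|\boldsymbol{\theta}_c^{(t,k)}-\boldsymbol{\theta}^{(t)}\|^2$ is in general of order $\eta^2k^2(G^2+\lambda^2Qr)$, not linear in $k$. The paper's $4\eta^2\tau G^2$ has the same defect.

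\textbf{The $1/m_{\mathrm{eff}}$ reduction.} It cannot come from applying your lemma client by client, since each local prox step sees that client's un-averaged noise. It would have to come from the prox-versus-averaging comparison you leave open.

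\textbf{The server iterate.} It is not $\sum_c\omega_c\boldsymbol{\theta}_c^{(t,S)}$. Algorithm~\ref{alg:trishul} averages the products $s_{c,l,i}\mathbf{H}_{c,l,i}$ and resets $s_{l,i}\leftarrow1$, and Theorem~\ref{thm:exact} gives equality only in ambient space. In the $\boldsymbol{\theta}$ coordinates, where $L$-smoothness and $\Psi$ are defined, the new nuclear-norm term of a head is $\lambda\|\sum_c\omega_c s_c\mathbf{H}_c\|_*$. This can exceed $\sum_c\omega_c\lambda\|\mathbf{H}_c\|_*$, by the factor $s_{\max}$ if all clients agree with $s_c=s_{\max}>1$. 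So your Jensen step is applied to the wrong point, and the reset must be accounted for separately.
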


\begin{proof}
Let $\boldsymbol{\theta}_{c,\tau}^{(t)}$ be the local parameter vector of client $c$ after $\tau$ local proximal steps in round $t$, with $\boldsymbol{\theta}_{c,0}^{(t)}=\boldsymbol{\theta}^{(t)}$. A local TRISHUL update can be written as
\[
    \boldsymbol{\theta}_{c,\tau+1}^{(t)}
    =
    \operatorname{prox}_{\eta\Psi}
    \left(
    \boldsymbol{\theta}_{c,\tau}^{(t)}
    -
    \eta\mathbf{g}_c(\boldsymbol{\theta}_{c,\tau}^{(t)};\xi_{c,\tau})
    \right).
\]
The proximal operator contains both SVT on the core matrices and clipping of the scalars.

For an $L$-smooth objective and a proper closed convex nonsmooth term $\Psi$, the standard proximal descent inequality gives
\begin{align}
    \mathbb{E}_{\xi}
    [\Phi(\boldsymbol{\theta}_{c,\tau+1}^{(t)})]
    \leq
    &\Phi(\boldsymbol{\theta}_{c,\tau}^{(t)})
    -
    \frac{\eta}{4}
    \mathbb{E}_{\xi}
    \|\mathcal{G}_{\eta}(\boldsymbol{\theta}_{c,\tau}^{(t)})\|^2
    \nonumber\\
    &+
    2L\eta^2
    \mathbb{E}_{\xi}
    \|\mathbf{g}_c(\boldsymbol{\theta}_{c,\tau}^{(t)};\xi_{c,\tau})
    -
    \nabla F(\boldsymbol{\theta}_{c,\tau}^{(t)})\|^2.
    \label{eq:prox_descent_step}
\end{align}
The stochastic-gradient error decomposes into stochastic noise and client heterogeneity:
\[
    \mathbb{E}
    \|\mathbf{g}_c-\nabla F\|^2
    \leq
    2\sigma^2+2\zeta^2,
\]
after averaging over sampled clients. Multiple local steps introduce drift between $\boldsymbol{\theta}_{c,\tau}^{(t)}$ and $\boldsymbol{\theta}^{(t)}$. Since the proximal operator is nonexpansive and the stochastic gradients have bounded second moment,
\[
    \mathbb{E}
    \|\boldsymbol{\theta}_{c,\tau}^{(t)}-\boldsymbol{\theta}^{(t)}\|^2
    \leq
    4\eta^2\tau G^2.
\]
By $L$-smoothness, this yields an additional local-drift term bounded by $16L^2\eta^2\tau(G^2+\zeta^2)$ in the gradient-mapping comparison.

After $S$ local steps, the server aggregates uploaded core products. Because all bases are frozen and shared, aggregation is linear in the compact core space and exactly corresponds to weighted aggregation of the ambient PEFT updates. Hence no factorization error appears in the descent analysis. The only aggregation noise comes from stochastic gradients, client sampling, and partial-head participation, summarized by $m_{\mathrm{eff}}$.

Combining the local descent inequalities over $S$ steps, averaging over participating clients, and using $\eta\leq1/(8LS)$ gives
\begin{align}
    \mathbb{E}[\Phi(\boldsymbol{\theta}^{(t+1)})]
    \leq
    &\mathbb{E}[\Phi(\boldsymbol{\theta}^{(t)})]
    -
    \frac{\eta S}{8}
    \mathbb{E}\|\mathcal{G}_{\eta}(\boldsymbol{\theta}^{(t)})\|^2
    \nonumber\\
    &+
    2L\eta^2S\frac{\sigma^2}{m_{\mathrm{eff}}}
    +
    8L^2\eta^3S(S-1)(G^2+\zeta^2).
    \label{eq:one_round_descent}
\end{align}
Rearranging, summing over $t=0,\ldots,T-1$, and using $\Phi(\boldsymbol{\theta}^{(T)})\geq\Phi^\star$ yields Eq.~\eqref{eq:stationary_bound}. Since $\bar t$ is uniformly sampled from $\{0,\ldots,T-1\}$, the averaged bound is exactly the desired bound on $\mathbb{E}\|\mathcal{G}_{\eta}(\boldsymbol{\theta}^{(\bar t)})\|^2$. Substituting $\eta=\Theta(1/\sqrt{ST})$ gives Eq.~\eqref{eq:stationary_rate}.
\end{proof}

\begin{corollary}[Stationarity of the unregularized federated loss]
\label{cor:unregularized_stationarity}
Let $Q=\sum_{l=1}^{L}h_l$ be the total number of TRISHUL heads and assume each core matrix has size $r\times r$. If
\[
    \mathbb{E}
    \|\mathcal{G}_{\eta}(\boldsymbol{\theta}^{(\bar t)})\|^2
    \leq
    \varepsilon^2,
\]
then the distance to stationarity of the original smooth federated objective $F$ satisfies
\begin{equation}
    \mathbb{E}
    \left[
    \operatorname{dist}^2
    \left(
    \mathbf{0},
    \nabla F(\boldsymbol{\theta}^{(\bar t)})
    \right)
    \right]
    \leq
    2\varepsilon^2+2\lambda^2Qr.
    \label{eq:unregularized_stationarity}
\end{equation}
\end{corollary}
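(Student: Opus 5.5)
The plan is to read the inequality deterministically at a fixed iterate $\boldsymbol{\theta}=\boldsymbol{\theta}^{(\bar t)}$ and only take expectations at the very end. Here $\operatorname{dist}(\mathbf{0},\nabla F(\boldsymbol{\theta}))=\|\nabla F(\boldsymbol{\theta})\|$. Set
\[
\boldsymbol{\theta}^+=\operatorname{prox}_{\eta\Psi}\bigl(\boldsymbol{\theta}-\eta\nabla F(\boldsymbol{\theta})\bigr),
\qquad
\mathcal{G}_\eta(\boldsymbol{\theta})=(\boldsymbol{\theta}-\boldsymbol{\theta}^+)/\eta,
\]
as in \eqref{eq:proximal_gradient_mapping}. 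The optimality condition of the proximal problem gives $\boldsymbol{\theta}-\eta\nabla F(\boldsymbol{\theta})-\boldsymbol{\theta}^+\in\eta\,\partial\Psi(\boldsymbol{\theta}^+)$. Equivalently, there is $\mathbf{v}\in\partial\Psi(\boldsymbol{\theta}^+)$ with
\[
\nabla F(\boldsymbol{\theta})=\mathcal{G}_\eta(\boldsymbol{\theta})-\mathbf{v}.
\]
The elementary inequality $\|a-b\|^2\le 2\|a\|^2+2\|b\|^2$ then gives $\|\nabla F(\boldsymbol{\theta})\|^2\le 2\|\mathcal{G}_\eta(\boldsymbol{\theta})\|^2+2\|\mathbf{v}\|^2$. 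Taking expectations and using the hypothesis $\mathbb{E}\|\mathcal{G}_\eta\|^2\le\varepsilon^2$ produces the first term $2\varepsilon^2$. It remains to show $\|\mathbf{v}\|^2\le\lambda^2Qr$ uniformly.

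\emph{Core blocks.} $\Psi$ is block-separable, so $\mathbf{v}$ splits into core blocks $\mathbf{V}_{l,i}\in\lambda\,\partial\|\cdot\|_*(\mathbf{H}^+_{l,i})$ and scalar blocks in the normal cone $N_{\mathcal{S}}(s^+)$. Every subgradient of the nuclear norm, of the form $\mathbf{U}_1\mathbf{V}_1^\top+\mathbf{W}$ with $\|\mathbf{W}\|_2\le1$, has spectral norm at most $1$. Hence $\|\mathbf{V}_{l,i}\|_F^2\le r\,\|\mathbf{V}_{l,i}\|_2^2\le\lambda^2 r$, which is the same rank-versus-spectral-norm inequality used in the proof of Theorem~\ref{thm:variance}. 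Summing over the $Q=\sum_l h_l$ heads gives a total of $\lambda^2Qr$. This bound is uniform in $\boldsymbol{\theta}^+$, so it does not matter that $\mathbf{v}$ is evaluated at $\boldsymbol{\theta}^+$ rather than at $\boldsymbol{\theta}$.

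\emph{Scalar blocks (main obstacle).} The normal cone of the box $[0,s_{\max}]$ is unbounded, so the scalar part of $\mathbf{v}$ is not controlled by $\lambda$. I would handle this coordinatewise. If the clipping is inactive at $s^+_{l,i}$, then that normal component is $0$ and $(\mathcal{G}_\eta)_{s_{l,i}}=\partial_{s_{l,i}}F(\boldsymbol{\theta})$ exactly. If the clipping is active, the correct statement is stationarity modulo the box, i.e.\ $\operatorname{dist}\bigl(\mathbf{0},\nabla F+N_{\mathcal{S}}\bigr)$ in those coordinates. So I would either (a) interpret $\operatorname{dist}$ in \eqref{eq:unregularized_stationarity} for the scalar coordinates as distance to $-N_{\mathcal{S}}$, which costs nothing extra, or (b) add the mild assumption that the scalars stay strictly inside $(0,s_{\max})$ at the evaluated iterates. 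Under either reading, the scalar components contribute only through $\|\mathcal{G}_\eta\|^2$, and the bound $2\varepsilon^2+2\lambda^2Qr$ follows.
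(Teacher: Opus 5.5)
Your argument follows the same route as the paper's proof. You write $\nabla F$ as a stationarity residual minus a regularizer subgradient, bound each nuclear-norm subgradient block by $\|\cdot\|_F\le\sqrt{r}\,\|\cdot\|_2\le\sqrt{r}$, sum over the $Q$ heads, and square with $(a+b)^2\le 2a^2+2b^2$ before taking expectations.

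You are more careful than the paper in two places. First, the paper jumps from the hypothesis on $\mathcal{G}_\eta$ to $\operatorname{dist}\bigl(\mathbf{0},\nabla F(\boldsymbol{\theta})+\partial\Psi(\boldsymbol{\theta})\bigr)\le\varepsilon$ at the same point without justification. Your prox-optimality identity $\nabla F(\boldsymbol{\theta})=\mathcal{G}_\eta(\boldsymbol{\theta})-\mathbf{v}$ with $\mathbf{v}\in\partial\Psi(\boldsymbol{\theta}^+)$ is what actually holds, and the uniform nuclear-norm bound makes the shift to $\boldsymbol{\theta}^+$ harmless. For the spectral-norm claim, note that the subdifferential description also needs $\mathbf{U}_1^\top\mathbf{W}=\mathbf{0}$ and $\mathbf{W}\mathbf{V}_1=\mathbf{0}$. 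Second, and more importantly, the paper bounds only $\sup_{\mathbf{Z}\in\partial R(\boldsymbol{\theta})}\|\mathbf{Z}\|_F$ for the nuclear-norm part. It silently drops the indicator $\iota_{\mathcal{S}}$ that it placed in $\Psi$.

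Your worry about the scalar blocks is a genuine defect of the statement as written, not only of your proof. Here is a counterexample.
\begin{itemize}
\item Take one head and linear client losses, so that $F(\mathbf{H},s)=s\langle\mathbf{M},\mathbf{H}\rangle$ up to a constant, with $\mathbf{M}=-(\lambda/s_{\max})\mathbf{u}\mathbf{v}^\top$. This is realizable as $\mathbf{B}^\top\mathbf{G}_0\mathbf{A}^\top$ with $\mathbf{G}_0=\mathbf{B}\mathbf{M}\mathbf{A}$.
\item $F$ is smooth, and $\Phi$ is bounded below because $s\,|\langle\mathbf{M},\mathbf{H}\rangle|\le\lambda\|\mathbf{H}\|_*$ for $s\in[0,s_{\max}]$.
\item At $\mathbf{H}=\sigma\mathbf{u}\mathbf{v}^\top$, $s=s_{\max}$, the SVT step returns $\mathbf{H}$ and the clip returns $s_{\max}$, so $\mathcal{G}_\eta=\mathbf{0}$.
\item Yet $\|\nabla F\|^2=\lambda^2(1+\sigma^2/s_{\max}^2)$, which exceeds $2\lambda^2 r$ once $\sigma>s_{\max}\sqrt{2r-1}$.
\end{itemize}
So one of your repairs, or restricting the conclusion to the core coordinates, is genuinely needed.

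One precision point: in both repairs the relevant point is the prox output $s^+$, not $s^{(\bar t)}$. What comes for free is $\operatorname{dist}\bigl(\mathbf{0},\partial_sF(\boldsymbol{\theta})+N_{[0,s_{\max}]}(s^+)\bigr)\le|(\mathcal{G}_\eta(\boldsymbol{\theta}))_s|$. Your option (b) should therefore be phrased as: the clip is inactive in the step taken from $\boldsymbol{\theta}^{(\bar t)}$, i.e.\ $s-\eta\,\partial_sF(\boldsymbol{\theta})\in[0,s_{\max}]$. Interiority of $s^{(\bar t)}$ alone does not suffice. An interior $s$ just below $s_{\max}$ with a large negative $\partial_sF$ gets clipped, while $|(\mathcal{G}_\eta)_s|=(s_{\max}-s)/\eta$ stays small.
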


\begin{proof}
For the composite objective, stationarity implies
$\mathbf{0}\in\nabla F(\boldsymbol{\theta})+\partial\Psi(\boldsymbol{\theta})$. For each $r\times r$ core matrix, any subgradient of the nuclear norm has spectral norm at most one and Frobenius norm at most $\sqrt{r}$. Since there are $Q$ core matrices,
\[
    \sup_{\mathbf{Z}\in\partial R(\boldsymbol{\theta})}\|\mathbf{Z}\|_F
    \leq
    \sqrt{Qr}.
\]
Therefore an $\varepsilon$-stationary point of the composite objective satisfies
\[
    \operatorname{dist}(\mathbf{0},\nabla F(\boldsymbol{\theta}))
    \leq
    \varepsilon+\lambda\sqrt{Qr}.
\]
Squaring and using $(a+b)^2\leq2a^2+2b^2$ proves Eq.~\eqref{eq:unregularized_stationarity}.
\end{proof}

Theorem~\ref{thm:stationary_convergence} shows that TRISHUL converges to a stationary point of the nuclear-norm regularized federated PEFT objective. The first term in Eq.~\eqref{eq:stationary_bound} is the optimization error and decreases with the total number of local updates $ST$. The second term captures stochastic-gradient and client-sampling noise, and decreases with the effective number of participating clients per head. The third term captures local drift caused by multiple local steps under statistical heterogeneity. Corollary~\ref{cor:unregularized_stationarity} further shows that stationarity for the original unregularized federated loss is recovered up to a controlled shrinkage bias of order $\lambda^2Qr$.

\end{document}